\newtheorem{theorem}{Theorem}
\newtheorem{lemma}{Lemma}
\newtheorem{proposition}{Proposition}
\newtheorem{corollary}{Corollary}
\newtheorem{definition}{Definition}
\begin{document}
\title{A Unified Convex Surrogate for the Schatten-$p$ Norm}
\author{Chen Xu, Zhouchen Lin\thanks{Corresponding author.}, Hongbin Zha\\
	Key Laboratory of Machine Perception (MOE), School of EECS, Peking University, P. R. China\\
  Cooperative Medianet Innovation Center, Shanghai Jiao Tong University, P. R. China\\
	{\tt\small xuen@pku.edu.cn, zlin@pku.edu.cn,  zha@cis.pku.edu.cn}
}
\maketitle
	\begin{abstract}
		The Schatten-$p$ norm ($ 0< p < 1$) has been widely used to replace the nuclear norm for better approximating the rank function. However, existing methods are either 1) not scalable for large scale problems due to relying on singular value decomposition (SVD) in every iteration, or 2) specific to some $p$ values, e.g., $1/2$, and $2/3$. In this paper, we show that for any $p, p_1$, and $p_2 >0$ satisfying $1/p=1/p_1+1/p_2$, there is an equivalence between the Schatten-$p$ norm of one matrix and the Schatten-$p_1$ and the Schatten-$p_2$ norms of its two factor matrices. We further extend the equivalence to multiple factor matrices and show that all the factor norms can be \textit{convex and smooth} for any $p>0$. In contrast, the original Schatten-$p$ norm for $0<p<1$ is non-convex and non-smooth. As an example we conduct experiments on matrix completion. To utilize the convexity of the factor matrix norms, we adopt the accelerated proximal alternating linearized minimization algorithm and establish its sequence convergence. Experiments on both synthetic and real datasets exhibit its superior performance over the state-of-the-art methods. Its speed is also highly competitive.
	\end{abstract}
	\section{Introduction}
 \noindent In recent years, low rank matrix minimization has found wide applications, e.g., matrix completion \citep{mcproblem}, low rank representation \citep{lrr}, multi-task learning \citep{lifted}, etc. Often, we can formulate the problem as follows: 
	\begin{equation} \label{original}
		\min_X F(X) = \min_{X} f(X)+ \lambda \Omega(X),
	\end{equation}
	where $f(\cdot): \mathbb{R}^{m \times n} \rightarrow \mathbb{R}^{+}$  is the loss function, $\Omega( \cdot ): \mathbb{R}^{m \times n}  \rightarrow \mathbb{R}^{+}$ is the spectral regularization \citep{newspectra} which ensures low rankness, and $\lambda \in \mathbb{R}^+$  balances the two terms. 
	
	As the tightest convex envelop of rank function on the unit ball of matrix operator norm, the nuclear norm regularizer is often suggested for $\Omega(X)$ \citep{guaranteed,candesandtao}. In fact, the nuclear norm is the $\ell_1$-norm on the vector of singular values. It achieves low rankness by encouraging sparseness on the singular values. As \citet{fanli} pointed out, the $\ell_1$-norm is a loose approximation to the $\ell_0$-norm and overpenalizes large entries of vectors. By an analogy between the rank function of matrices and the $\ell_0$-norm of vectors, the nuclear norm also overpenalizes large singular values.  As a tighter approximation to the rank function, the Schatten-$p$ quasi-norm $(0 <p <1)$ is suggested to replace the nuclear norm \citep{nie}. For the task of matrix completion, the Schatten-$p$ quasi-norm has empirically shown to be  superior to the nuclear norm. Moreover, \citet{pisometry} theoretically prove that for the matrix completion problem the Schatten-$p$ quasi-norm with a small $p$ requires much fewer observed entries than the nuclear norm minimization does.    
	
	However,  the Schatten-$p$ quasi-norm is non-convex and non-smooth. So the optimization for problem \eqref{original} is  much more challenging. Recently,  \citet{iruclq} propose iterative reweighted least square (IRucLp) to solve  a smoothed subproblem by approximating the Schatten-$p$  quasi-norm at each iteration. They prove that any limit point of the generated sequence is a stationary point. Moreover, \citet{irnn} propose the iterative reweighted nuclear norm (IRNN) algorithm. Besides the Schatten-$p$ quasi-norm, IRNN is able to tackle a variety of regularizations, e.g., MCP \citep{mcp} and SCAD \citep{fanli}, on the singular values. A  similar convergence result as IRcuLq is also established. However, both of the algorithms involve computing SVD at each iteration, which is expensive for large-scale problems. 
	
	Alternative to \eqref{original}, the bilinear factorization with two factor matrix norm regularizers  is suggested \citep{maximummagine,unifying,scalableAlgorithm}:
	\begin{equation}\label{factor}
		\min_{U,V} F(U,V) = \min_{U,V} f(UV^T)+ \lambda \left(\Omega_u(U)+\Omega_v(V)\right),
	\end{equation}
	where $U \in \mathbb{R}^{m \times d}$ and  $V \in \mathbb{R}^{n \times d}$ are the unknown factor matrices. Quite often, $d \ll \min \{m,n\}$ holds. When minimizing \eqref{factor}, one only needs to operate on two much smaller factor matrices in contrast to the full dimensional $X$ as \eqref{original}. Thus \eqref{factor} is better suited for large-size applications. 
	As \citet{maximummagine} indicated, when $\Omega_u(U)+ \Omega_v(V) =\|U\|_F^2/2 + \|V\|_F^2/2$, it can  be equivalently represented as the surrogate of  $\Omega(X)=\|X\|_*$ when enforcing $X=UV^T$.  Let $r^*$ denote  the rank of the optimal $X^*$ in \eqref{original},  \citet{spectralreg} proved that  the minimum objective function values of \eqref{original} and \eqref{factor} are equal once  $d \geq r^*$.  Quite recently, \citet{scalableAlgorithm, aistats} extended the surrogate of the nuclear norm regularizer $\Omega(X)$ to that of specific Schatten-$p$ norms, where $p=1/3$, $1/2$, or $2/3$. They proposed to use the proximal alternating linearized minimization (PALM) algorithm and established its sequence convergence. Motivated by these results, we  further extend the surrogate to the general Schatten-$p$ norm. The contributions of this paper are as follows:
	%In the low rank problems, it is always satisfied that the intrinsic rank $r \leq \min \{m,n\}$. When $r$ is not known apriori, we can overestimate $d$ in much freedom satisfying  $r\leq d \ll \min\{m,n\}$. When $r$ is determined  by  the problems, e.g., structure from motion (SfM) \cite{}, setting $d=r$ can lead better solution \cite{unifying}. In both cases, 
	\begin{enumerate}[(a)]
		\item We show that for any $p, p_1$, and $p_2>0$ satisfying $1/p=1/p_1+1/p_2$, there is an equivalence between the Schatten-$p$ norm of $X$ and the Schatten-$p_1$ and the Schatten-$p_2$ norms of $U$ and $V$ when enforcing $X=UV^T$ (See Theorem \ref{theorem1}). The existing surrogates for $p=1$, $1/2$, and $2/3$ are only special cases of ours. We also give an \textit{entirely different and much simpler} proof than the existing ones.
		\item We extend the above result to multi-factor matrices (See Corollary \ref{corollary2}) and show that  each factor matrix norm of the surrogate can be \textit{convex and smooth} for any $p>0$. In contrast, the Schatten-$p$ norm $(0<p<1)$ is non-smooth and non-convex, and the results of \citet{scalableAlgorithm, aistats} are only limited to two or three-factor cases which all involve the non-smooth nuclear norm.  
		\item We unify the minimization of \eqref{original} and \eqref{factor} for  general Schatten-$p$ norm regularizers, where the former is reformulated to  the latter (See Theorem \ref{equivalence}).  We also show that the factorization formulation should be preferred when $0<p<1$.
		\item  We conduct experiments on matrix completion as an example to test our framework.  By incorporating the convexity of the factor matrix norms, our accelerated proximal alternating algorithm achieves state-of-the-art performance. We also prove its sequence convergence.   
	\end{enumerate}
	\section{Notations and Background}\label{notation}
	Consider the SVD of a matrix $X \in \mathbb{R}^{m \times n}$: $X=U_X \mathrm{diag}\left(\sigma_i(X)\right) V_X^T$, where $\sigma_i(X)$ denotes its $i$-th singular value in descending order. Then the Schatten-$p$ norm $(0< p < \infty)$ of  $X$ is defined as
	\begin{equation}\label{schatten}
		\|X\|_{S_p} \triangleq  \left( \sum_{i=1}^{\min\{m,n\}} \sigma_i^p(X)  \right)^{\frac{1}{p}}.
	\end{equation}
	Special cases of the Schatten-$p$ norm  include  
	the nuclear norm ($p=1$)  and  the Frobenius norm ($p=2$). When $p \geq 1$, $\|X\|_{S_p}^p$ is convex w.r.t. $X$. When $p > 1$, $\|X\|_{S_p}^p$ is further differentiable everywhere with the gradient being  $\nabla_X \|X\|_{S_p}^p= p U_X \mathrm{diag}\left(\sigma_i^{p-1}(X)\right)V_X^T$ \citep{gradientofnorm}. 
	%When $p \geq 2 $, it becomes twice differentiable. Thus the Hessian matrix can be computed.
	
	The proximal mapping of $\|X\|_{S_p}^p$ is defined as:
	\begin{equation}\label{proximadefine}
		\mathbf{Prox}_{\lambda, p}(Y)= \arg \min_X \frac{1}{2} \|X-Y\|_F^2+ \frac{\lambda}{p} \|X\|_{S_p}^p
	\end{equation} 
	\begin{lemma}\citep{gsvt} \label{proximalsp}
		Let $Y=U_Y diag\left(\sigma_i(Y)\right) V_Y^T$ be the SVD of $Y \in \mathbb{R}^{m \times n}$ with $\{\sigma_i(Y)\}$ in descending order. Then we have  
		\begin{equation}
			\mathbf{Prox}_{\lambda, p}(Y)= U_Y \mathrm{diag}(\hat \sigma_i)V_Y^T, 
		\end{equation}
    where $\hat \sigma_i$ is defined as the scalar proximal mapping in \eqref{proximadefine}:
    \begin{equation}\label{sigmaproximal}
     \hat \sigma_i=\mathbf{Prox}_{\lambda, p}(\sigma_i(Y)).
    \end{equation}
%		where $\hat \sigma_i$ is defined as 
%		\begin{equation} \label{sigmaproximal}
%			\hat \sigma_i= \mathbf{Prox}_{\lambda, p}(\sigma_i(Y))= \arg \min_{\sigma_i} \frac{1}{2}\left( \sigma_i-\sigma_i(Y) \right)^2+ \frac{\lambda}{p} \sigma_i^p.
%		\end{equation}
	\end{lemma}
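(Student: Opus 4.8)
The plan is to reduce the matrix problem defining $\mathbf{Prox}_{\lambda, p}(Y)$ to a family of decoupled scalar problems, one per singular value, by exploiting the unitary invariance of $\|\cdot\|_{S_p}^p$ together with von Neumann's trace inequality. First I would expand the Frobenius term and rewrite the objective in terms of singular values,
\begin{equation*}
	\tfrac{1}{2}\|X-Y\|_F^2 + \tfrac{\lambda}{p}\|X\|_{S_p}^p = \tfrac{1}{2}\sum_i \sigma_i^2(X) - \langle X, Y\rangle + \tfrac{1}{2}\sum_i \sigma_i^2(Y) + \tfrac{\lambda}{p}\sum_i \sigma_i^p(X),
\end{equation*}
using $\|X\|_F^2=\sum_i\sigma_i^2(X)$ and $\|X\|_{S_p}^p=\sum_i\sigma_i^p(X)$. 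In this form the only term coupling the singular vectors of $X$ to those of $Y$ is the inner product $\langle X,Y\rangle$; everything else depends on $X$ solely through its singular values.

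The key step is von Neumann's trace inequality: for any $X, Y\in\mathbb{R}^{m\times n}$ one has $\langle X,Y\rangle \le \sum_i \sigma_i(X)\sigma_i(Y)$, with equality if and only if $X$ and $Y$ admit a simultaneous ordered SVD, i.e.\ can be written with the \emph{same} singular-vector pair $(U_Y,V_Y)$. Substituting this bound gives a lower bound that depends on $X$ only through its nonnegative, descending singular values and, crucially, is \emph{separable}:
\begin{equation*}
	\tfrac{1}{2}\|X-Y\|_F^2 + \tfrac{\lambda}{p}\|X\|_{S_p}^p \ \ge\ \sum_i\Big[\tfrac{1}{2}\big(\sigma_i(X)-\sigma_i(Y)\big)^2 + \tfrac{\lambda}{p}\sigma_i^p(X)\Big].
\end{equation*}
Each summand is exactly the scalar proximal objective evaluated at $\sigma_i(Y)$, so minimizing the right-hand side term by term over $\sigma_i(X)\ge 0$ is achieved by $\sigma_i(X)=\hat\sigma_i=\mathbf{Prox}_{\lambda, p}(\sigma_i(Y))$ as in \eqref{sigmaproximal}.

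It then remains to exhibit a matrix attaining this lower bound, and I would take $X^\star = U_Y\,\mathrm{diag}(\hat\sigma_i)\,V_Y^T$: by construction it shares singular vectors with $Y$, so von Neumann's inequality holds with equality, and its diagonal entries realize the termwise-optimal $\hat\sigma_i$. The main obstacle is a compatibility issue hidden in this last step. For the two requirements — equality in von Neumann and termwise optimality of the singular values — to hold \emph{simultaneously}, the $\hat\sigma_i$ must be nonnegative and arranged in descending order, so that they genuinely are the ordered singular values of $X^\star$ and the equality condition (which pairs the $i$-th largest singular values of $X$ and $Y$) is not spoiled by a rearrangement. This reduces to showing that the scalar map $y\mapsto\mathbf{Prox}_{\lambda, p}(y)$ is nonnegative for $y\ge 0$ and monotonically nondecreasing; since $\{\sigma_i(Y)\}$ is descending, monotonicity then forces $\{\hat\sigma_i\}$ to be descending as well. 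For $p\ge1$ this follows from convexity and firm nonexpansiveness of the scalar prox, whereas for $0<p<1$ the scalar objective is nonconvex and one must argue monotonicity of its global minimizer directly by comparing objective values across candidate minimizers. Establishing this order-preserving property is the crux; once it is in hand, $X^\star$ meets the lower bound, is therefore a minimizer, and the claimed spectral form of $\mathbf{Prox}_{\lambda, p}(Y)$ follows.
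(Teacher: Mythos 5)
Your proposal is correct and takes essentially the same approach as the proof behind this lemma, which the paper does not reprove but quotes from \citep{gsvt}: reduce to decoupled scalar problems via unitary invariance and von Neumann's trace inequality, then check that the scalar prox values are compatible with the ordered SVD. The monotonicity you flag as the crux is exactly the auxiliary lemma established in \citep{gsvt}, and it holds for arbitrary (even nonconvex, $0<p<1$) penalties by the standard exchange argument of adding the two optimality inequalities, which yields $(x_1-x_2)(y_1-y_2)\geq 0$; so your plan is completable as stated.
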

	When  $p \geq 1$, problem \eqref{sigmaproximal} is strongly convex. Thus it can be easily solved  by off-the-shelf algorithms. There are some special cases of $p$ that have closed-form solutions, e.g. $\hat \sigma_i = \max(\sigma_i(Y)- \lambda, 0)$ if $p=1$, which is known as the soft-thresholding \citep{donoho},  and  $\hat \sigma_i = \sigma_i(Y)/(1+\lambda)$ when $p=2$ by making the derivative of the objective function zero.  When $p <1$,  problem \eqref{sigmaproximal} becomes non-smooth and non-convex. By considering its structure  properly, Zuo et al. \citep{gisa} proposed the generalized iterated shrinkage algorithm (GISA), which solved  \eqref{sigmaproximal} efficiently with high precision.  Thus in this paper, we regard the proximal mapping of \eqref{proximadefine} for any $0<p <\infty$ as an easy problem. 
	
	For the analysis on convergence of algorithms for problems with  non-convex objectives, we need the definition of critical points given in \citep{palm}:
	\begin{definition} \textbf{(Critical Points)}
		Let a non-convex function $g: \mathbb{R}^{n} \rightarrow (\infty, +\infty]$ be a proper and lower semi-continuous function, and $\mathrm{dom}~g =\{x \in \mathbb{R}^n: g(x)< + \infty\}$.
		\begin{itemize}
			\item For any $x \in \mathrm{dom}~g$, the Fr\'{e}chet sub-differential of $g$ at $x$ is defined as 
			\begin{equation}
			\begin{split}
				\hat{\partial}g(x)&=\{u\in \mathbb{R}^n:\\&\lim_{y\neq x}\inf_{y\rightarrow x} \frac{g(y)-g(x)-\left<u,y-x\right>}{\|y-x\|_{2}} \geq 0\},
			\end{split}
			\end{equation}
			and 
			\begin{equation}
				\hat{\partial}g(x)=\emptyset~\text{if}~x\not \in \mathrm{dom}~g.
			\end{equation}
			%	\item The limiting sub-differential of $g$ at $x$ is defined as
			%		\begin{equation}
			%		\partial g(x)=\{u\in \mathbb{R}^n: \exists x^{k}\rightarrow x, g(x^{k})\rightarrow g(x)~\text{and}~u^{k}\in \hat{\partial}g(x^{k})\rightarrow u~\text{as}~k\rightarrow \infty\}.
			%		\end{equation}
			\item The points whose sub-differential contains $0$ are called critical points, i.e., a  point $x$ is a critical point of $g$ if $0\in \partial g(x).$
		\end{itemize}
		
	\end{definition}
	\section{Unified Surrogate for Schatten-$p$ Norm} 
	Before giving our unified surrogate for the Schatten-$p$ norm, we review three existing surrogates for  specific $p$ values, i.e., $p=1, 2/3$, and  $1/2$.
	\begin{proposition}\label{bifro} (\textbf{Bi-Frobenius Norm Surrogate} \citep{maximummagine,spectralreg})
		Given matrices $U \in \mathbb{R}^{m \times d}$, $V \in \mathbb{R}^{n \times d}$, and $X \in \mathbb{R}^{m \times n}$ with $rank(X)=r \leq d$, the following holds:
		\begin{equation}
			\begin{split}
				\|X\|_{*}=& \min_{U,V: X=UV^T} \frac{\|U\|_{F}^2}{2} +\frac{\|V\|_{F}^2}{2}.
			\end{split} 
		\end{equation}    
	\end{proposition}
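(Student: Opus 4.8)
The plan is to establish the two inequalities that together pin the minimum to $\|X\|_*$: a lower bound showing that $\frac{\|U\|_F^2}{2}+\frac{\|V\|_F^2}{2}\ge\|X\|_*$ holds for \emph{every} admissible factorization $X=UV^T$, and an explicit construction that attains equality. Since the right-hand side is a minimum over all such $U,V$, these two facts suffice.

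For the lower bound I would first apply the AM--GM inequality to get $\frac{\|U\|_F^2}{2}+\frac{\|V\|_F^2}{2}\ge\|U\|_F\|V\|_F$, with equality exactly when $\|U\|_F=\|V\|_F$. The remaining and only substantial step is the matrix inequality $\|UV^T\|_*\le\|U\|_F\|V\|_F$. To prove it, let $UV^T=P\Lambda Q^T$ be a thin SVD with $P,Q$ having orthonormal columns, so that $\|UV^T\|_*=\mathrm{tr}(\Lambda)=\mathrm{tr}(P^T U V^T Q)=\langle P^T U,\,Q^T V\rangle$. Cauchy--Schwarz for the Frobenius inner product bounds this by $\|P^T U\|_F\,\|Q^T V\|_F$, and because $P,Q$ have orthonormal columns (so $PP^T\preceq I$ and $QQ^T\preceq I$) we have $\|P^T U\|_F\le\|U\|_F$ and $\|Q^T V\|_F\le\|V\|_F$. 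Chaining these gives $\frac{\|U\|_F^2}{2}+\frac{\|V\|_F^2}{2}\ge\|U\|_F\|V\|_F\ge\|UV^T\|_*=\|X\|_*$.

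For achievability I would use the compact SVD $X=U_X\Sigma V_X^T$ with $\Sigma=\mathrm{diag}(\sigma_1,\dots,\sigma_r)$ and set $U=U_X\Sigma^{1/2}$ and $V=V_X\Sigma^{1/2}$, padding each with $d-r$ zero columns so that $U\in\mathbb{R}^{m\times d}$ and $V\in\mathbb{R}^{n\times d}$; here the hypothesis $\mathrm{rank}(X)=r\le d$ is exactly what makes the padding possible. Then $UV^T=U_X\Sigma V_X^T=X$, and $\|U\|_F^2=\|V\|_F^2=\mathrm{tr}(\Sigma)=\sum_{i=1}^{r}\sigma_i=\|X\|_*$, so $\frac{\|U\|_F^2}{2}+\frac{\|V\|_F^2}{2}=\|X\|_*$. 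This matches the lower bound, so the minimum equals $\|X\|_*$ and is attained.

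The hard part is the single inequality $\|UV^T\|_*\le\|U\|_F\|V\|_F$; everything else is AM--GM and a one-line SVD construction. I expect to present it via the trace-plus-Cauchy--Schwarz argument above, though it could equally be derived from the duality $\|Z\|_*=\max_{\|W\|_{\mathrm{op}}\le1}\langle W,Z\rangle$ or from von Neumann's trace inequality, whichever reads most cleanly.
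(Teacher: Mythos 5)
Your proof is correct, and it takes a genuinely different route from the paper's. The paper never proves this proposition in isolation: it is cited from prior work and then recovered as the special case $p=1$, $p_1=p_2=2$ of Theorem~\ref{theorem1}, whose proof chains three general inequalities — Lemma~\ref{lemma1} (the Horn--Johnson singular-value product inequality $\sum_i \sigma_i^p(UV^T)\le\sum_i\sigma_i^p(U)\sigma_i^p(V)$), H\"older's inequality on the singular-value vectors, and Young/AM--GM — followed by the same SVD square-root construction you use for achievability. You replace the first two steps with a self-contained trace argument: writing $\|UV^T\|_*=\mathrm{tr}\left(P^T U V^T Q\right)=\left\langle P^T U, Q^T V\right\rangle$ and applying Cauchy--Schwarz with the contraction bounds $\|P^T U\|_F\le\|U\|_F$, $\|Q^T V\|_F\le\|V\|_F$. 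This is more elementary — it needs no majorization machinery, only the orthonormality of $P$ and $Q$ — but it is tailored to the case $p_1=p_2=2$: the trace pairing exploits the inner-product structure of the Frobenius norm and has no analogue for general Schatten exponents, so it cannot be pushed to the paper's Theorem~\ref{theorem1}, whereas the Lemma~\ref{lemma1}/H\"older route generalizes to any $p,p_1,p_2>0$ with $1/p=1/p_1+1/p_2$. Your achievability step ($U=U_X\Sigma^{1/2}$, $V=V_X\Sigma^{1/2}$, zero-padded to width $d$ using $r\le d$) matches the paper's construction exactly; the AM--GM equality condition $\|U\|_F=\|V\|_F$ is not even needed since you verify equality directly.
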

	
	\begin{proposition}\label{binuc} (\textbf{Frobenius/Nuclear and Bi-Nuclear Norm Surrogate} \citep{scalableAlgorithm, aistats})
		Given matrices $U \in \mathbb{R}^{m \times d}$, $V \in \mathbb{R}^{n \times d}$, and $X \in \mathbb{R}^{m \times n}$ with $rank(X)=r \leq d$, the following holds:
		\begin{equation}
			\begin{split}
				\frac{3}{2}\|X\|_{S_{2/3}}^{2/3}=& \min_{U,V:X=UV^T} \|U\|_{*} +{\frac{1}{2}}\|V\|_{F}^2 \quad  \text{and} \\
				2\|X\|_{S_{1/2}}^{1/2}=& \min_{U,V:X=UV^T} \|U\|_{*} +\|V\|_{*}.
			\end{split} 
		\end{equation}
	\end{proposition}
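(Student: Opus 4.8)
The plan is to establish each identity as a pair of matching inequalities: a lower bound on the factor-norm objective that holds for \emph{every} factorization $X=UV^T$, together with one explicit factorization that attains it. Both identities rest on the same two ingredients, so I would run them in parallel. The first ingredient is the Hölder inequality for Schatten norms, $\|AB\|_{S_p}\le\|A\|_{S_{p_1}}\|B\|_{S_{p_2}}$ whenever $1/p=1/p_1+1/p_2$, a classical consequence of von Neumann's trace inequality and singular-value majorization; since transposition preserves singular values, it gives $\|X\|_{S_p}=\|UV^T\|_{S_p}\le\|U\|_{S_{p_1}}\|V\|_{S_{p_2}}$. The second ingredient is an elementary one-variable minimization.

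For the achievability direction I would start from the SVD $X=U_X\Sigma V_X^T$ with $\Sigma=\mathrm{diag}(\sigma_i(X))$ and split the singular values between the factors. In the bi-nuclear case set $U=U_X\Sigma^{1/2}$ and $V=V_X\Sigma^{1/2}$; then $UV^T=X$ while $\|U\|_*=\|V\|_*=\sum_i\sigma_i^{1/2}(X)$, so $\|U\|_*+\|V\|_*=2\|X\|_{S_{1/2}}^{1/2}$. In the Frobenius/nuclear case the scalar problem $\min_{uv=\sigma}|u|+\tfrac12 v^2$ is minimized at $u=\sigma^{2/3},\,v=\sigma^{1/3}$ with value $\tfrac32\sigma^{2/3}$; this dictates the exponents $U=U_X\Sigma^{2/3},\,V=V_X\Sigma^{1/3}$, for which $\|U\|_*=\sum_i\sigma_i^{2/3}(X)=\|V\|_F^2$ and hence $\|U\|_*+\tfrac12\|V\|_F^2=\tfrac32\|X\|_{S_{2/3}}^{2/3}$. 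Because appending zero columns alters neither $UV^T$ nor any of these norms, the hypothesis $r\le d$ lets me pad both factors up to $d$ columns.

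For the lower bound---which I expect to be the crux---I would use Hölder to collapse the factorization constraint into a single scalar inequality and then finish with AM-GM. In the bi-nuclear case, Hölder with $p_1=p_2=1$ gives $\|X\|_{S_{1/2}}\le\|U\|_*\|V\|_*$, whence $\|U\|_*+\|V\|_*\ge 2\sqrt{\|U\|_*\|V\|_*}\ge 2\|X\|_{S_{1/2}}^{1/2}$. In the Frobenius/nuclear case, Hölder with $p_1=1,\,p_2=2$ gives $\|X\|_{S_{2/3}}\le\|U\|_*\|V\|_F$; writing $a=\|U\|_*$, $b=\|V\|_F$ and $s=\|X\|_{S_{2/3}}^{2/3}$, this reads $ab\ge s^{3/2}$, and minimizing $a+\tfrac12 b^2$ subject to $ab\ge s^{3/2}$ (the minimum sits on the boundary, at $a=s,\,b=s^{1/2}$) yields $a+\tfrac12 b^2\ge\tfrac32 s$. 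Pairing each lower bound with the factorization above closes both cases.

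The only genuinely nontrivial step is the invocation of Hölder's inequality for Schatten norms: it is what converts an optimization over all factorizations into the two scalar inequalities $ab\ge s^{3/2}$ (and its $p_1=p_2=1$ analogue), after which the argument reduces to one-variable calculus, a diagonal SVD construction, and zero-padding. If one did not wish to quote Hölder, the same inequalities could instead be obtained by reducing directly to the singular values, but that route requires an additional alignment argument that the Hölder approach sidesteps.
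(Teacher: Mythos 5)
Your proof is correct and follows essentially the same route as the paper's own argument (the supplementary proof of Theorem \ref{theorem1}, of which this proposition is the special case $(p_1,p_2)=(1,1)$ and $(p_1,p_2)=(1,2)$): the Schatten--H\"older inequality you invoke is exactly the paper's Lemma \ref{lemma1} (Horn--Johnson, Theorem 3.3.14(c)) combined with the sequence H\"older inequality, your AM--GM/one-variable minimization plays the role of the paper's Jensen (Young-type) step, and your SVD constructions $U=U_X\Sigma^{p/p_1}$, $V=V_X\Sigma^{p/p_2}$ with zero-padding are identical to the paper's. One attribution caveat only: for $p<1$ the bound $\lbar UV^T\rbar_{S_p}\leq \lbar U\rbar_{S_{p_1}}\lbar V\rbar_{S_{p_2}}$ requires the log-majorization result behind Lemma \ref{lemma1}, not von Neumann's trace inequality alone, but since that is precisely the lemma the paper uses, the substance of your argument is unaffected.
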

	Note that we have rewritten  Proposition \ref{binuc}  in a more consistent way than the original one in \citep{scalableAlgorithm, aistats}. Combining the above two propositions, one may induce that there are some unified surrogates for Schatten-$p$ norm. In fact, we have:
	%\begin{lemma}\label{bifro} (\textbf{Bi-Nuclear Norm Surrogate} \cite{scalableAlgorithm})
	%	Given matrices $U \in \mathbb{R}^{m \times d}$, $V \in \mathbb{R}^{n \times d}$, and $X \in \mathbb{R}^{m \times n}$ with $rank(X)=r \leq d \leq \min\{m,n\}$, the following holds:
	%    \begin{equation}
	%    \begin{split}
	%    2\|X\|_{S_{1/2}}^{1/2}=& \min_{U,V:X=UV^T} \|U\|_* +\|V\|_*.
	%    \end{split} 
	%    \end{equation}
	%\end{lemma}

	%Before giving the complete proofs for Theorem \ref{theorem1}, we first present following lemma.
	%\begin{lemma} \label{lemma1} \cite{topics}[Theorem 3.3.14 (c)]
	%	For any matrix $A\in \mathbb{R}^{m \times l}$ and $B \in \mathbb{R}^{n \times l}$, denoting $\sigma_i(\cdot)$ as the singular value in descending order, we have 
	%	\begin{equation}
	%		\begin{split}
	%			&\sum_{i=1}^{k} \sigma_i^p(AB^T) \leq  \sum_{i=1}^{k} \sigma_i^p(A) \sigma_i^p(B), \\&  \text{for~} k=1,\ldots, \min(m,n,l) \text{~and~} \forall ~p>0.  
	%		\end{split}
	%	\end{equation}
	%\end{lemma} 
	
	\begin{theorem} \label{theorem1} (\textbf{Bi-Schatten-$p$ Norm Surrogate})
		Given matrices $U \in \mathbb{R}^{m \times d}$, $V \in \mathbb{R}^{n \times d}$, and $X \in \mathbb{R}^{m \times n}$ with $rank(X)=r \leq d$, for any $p,p_1$ and $p_2 > 0$ satisfying ${1}/{p}= {1}/{p_1}+{1}/{p_2}$, we have %\footnote{The proofs of the results in this paper can be found in Supplementary Materials.}
		\begin{equation} \label{general1}
			\begin{split}
				\frac{1}{p} \|X\|_{S_p}^p
				=& \min_{U,V: X=UV^T} \frac{1}{p_1}\|U\|_{S_{p_1}}^{p_1}+ \frac{1}{p_2} \|V\|_{S_{p_2}}^{p_2}.
			\end{split} 
		\end{equation}
	\end{theorem}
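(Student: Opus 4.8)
The plan is to prove the equality by establishing two matching inequalities; all the nontrivial content lies in the lower bound on the factored objective.

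For the easy inequality $\min_{U,V:\,X=UV^T}\big(\frac{1}{p_1}\|U\|_{S_{p_1}}^{p_1}+\frac{1}{p_2}\|V\|_{S_{p_2}}^{p_2}\big)\le \frac{1}{p}\|X\|_{S_p}^p$, I would exhibit one feasible factorization attaining the value. Writing the thin SVD $X=U_X\Sigma_X V_X^T$ with $\Sigma_X=\mathrm{diag}(\sigma_i(X))$ (positive part only), I set $U=U_X\Sigma_X^{p/p_1}$ and $V=V_X\Sigma_X^{p/p_2}$, padded with zero columns to reach width $d$ (possible since $r\le d$, and harmless since zero columns only contribute zero singular values). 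Because $p/p_1+p/p_2=p(1/p_1+1/p_2)=1$, we get $UV^T=U_X\Sigma_X V_X^T=X$, while $\sigma_i(U)=\sigma_i(X)^{p/p_1}$ and $\sigma_i(V)=\sigma_i(X)^{p/p_2}$ yield $\frac{1}{p_1}\|U\|_{S_{p_1}}^{p_1}+\frac{1}{p_2}\|V\|_{S_{p_2}}^{p_2}=\big(\frac{1}{p_1}+\frac{1}{p_2}\big)\sum_i\sigma_i(X)^p=\frac{1}{p}\|X\|_{S_p}^p$. This is a one-line computation.

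The core is the reverse bound: for \emph{every} factorization $X=UV^T$,
\[ \frac{1}{p}\|X\|_{S_p}^p \;\le\; \frac{1}{p_1}\|U\|_{S_{p_1}}^{p_1}+\frac{1}{p_2}\|V\|_{S_{p_2}}^{p_2}. \]
I would reduce this to scalar singular-value inequalities and chain two elementary steps. The first is pointwise Young's inequality with the conjugate exponents $q_1=p_1/p$ and $q_2=p_2/p$ (conjugate precisely because $1/q_1+1/q_2=p(1/p_1+1/p_2)=1$): applied to $a=\sigma_i(U)^p$ and $b=\sigma_i(V)^p$ and summed over $i$, it gives $\frac{1}{p}\sum_i\sigma_i(U)^p\sigma_i(V)^p\le \frac{1}{p_1}\sum_i\sigma_i(U)^{p_1}+\frac{1}{p_2}\sum_i\sigma_i(V)^{p_2}$, which is exactly the right-hand side above. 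It then remains to prove $\sum_i\sigma_i(X)^p\le\sum_i\sigma_i(U)^p\,\sigma_i(V)^p$, with all singular values in descending order.

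This last singular-value inequality is the main obstacle, and the only place where matrix structure rather than a scalar estimate enters. I would obtain it from Horn's multiplicative singular-value inequalities for the product $X=UV^T$, namely $\prod_{i=1}^k\sigma_i(X)\le\prod_{i=1}^k\sigma_i(U)\sigma_i(V)$ for every $k$ (using $\sigma_i(V^T)=\sigma_i(V)$), which say that $(\sigma_i(X))$ is weakly log-majorized by $(\sigma_i(U)\sigma_i(V))$. Since $t\mapsto t^p$ has the form $\phi$ with $\phi(e^s)=e^{ps}$ convex and increasing, weak log-majorization is preserved under this map and passes to the ordinary weak majorization $\sum_i\sigma_i(X)^p\le\sum_i(\sigma_i(U)\sigma_i(V))^p$. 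Combining this with the Young step gives the reverse bound, and together with the explicit construction it shows the minimum equals $\frac{1}{p}\|X\|_{S_p}^p$ and is attained. I expect the only delicate point to be stating Horn's inequalities over the correct index range (at most $\min\{m,n,d\}$ nonzero singular values, consistent with the zero-column padding) and invoking the standard fact that weak log-majorization implies weak majorization under $t\mapsto t^p$; everything else is the construction and Young's inequality.
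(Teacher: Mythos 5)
Your proposal is correct and follows essentially the same route as the paper: the attainment construction $U=U_X\Sigma_X^{p/p_1}$, $V=V_X\Sigma_X^{p/p_2}$ (with zero-padding to width $d$) is identical, and your key inequality $\sum_i\sigma_i^p(X)\le\sum_i\sigma_i^p(U)\,\sigma_i^p(V)$ is exactly the paper's Lemma \ref{lemma1} (Horn and Johnson, Theorem 3.3.14(c)), which you merely re-derive from Horn's product inequalities via the standard log-majorization transfer instead of citing it. Your single pointwise application of Young's inequality with exponents $p_1/p$ and $p_2/p$ condenses the paper's two-step H\"older-then-Jensen chain but yields the same bound, so the differences are cosmetic.
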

	We provide an \textit{entirely different and much simpler} proof than those in \citep{scalableAlgorithm, aistats, spectralreg}, which use the property of the specific $p$, $p_1$, and $p_2$ values shown in Propositions \ref{bifro} and \ref{binuc} \footnote{In fact, we have tried to extend the proof in \citep{scalableAlgorithm, aistats} to the general case and  found that it needs to ensure one of $p_1$ or $p_2$ to be greater than $1$, which is less general compared with Theorem \ref{theorem1}.}. The core idea is to utilize the property of general Schatten-$p$ norms derived from the determinant of matrices:
	\begin{lemma} \label{lemma1} \citep{topics}[Theorem 3.3.14 (c)]
		For any matrices $A\in \mathbb{R}^{m \times l}$ and $B \in \mathbb{R}^{n \times l}$, denoting $\{\sigma_i(\cdot)\}$ as the singular values in descending order, we have 
		\begin{equation}
			\begin{split}
				&\sum_{i=1}^{\min\{m,n,l\} } \sigma_i^p(AB^T) \leq  \sum_{i=1}^{\min\{m,n,l\}} \sigma_i^p(A) \sigma_i^p(B),~~ \forall ~p>0.  
			\end{split}
		\end{equation}
	\end{lemma}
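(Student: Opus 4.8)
The plan is to prove the inequality by first reducing it to a multiplicative (log-majorization) relation among the singular values, and then lifting that relation to the $p$-th powers through a convexity argument. Since $\sigma_i(B^{T})=\sigma_i(B)$ for every $i$, I write $M=AB^{T}$ and set $N=\min\{m,n,l\}$, $a_i=\sigma_i(M)$, and $b_i=\sigma_i(A)\,\sigma_i(B)$. Both $(a_i)$ and $(b_i)$ are non-negative and non-increasing in $i$; the latter because it is the entrywise product of two non-increasing non-negative sequences. In this notation the claimed inequality is exactly $\sum_{i=1}^{N} a_i^{p}\le \sum_{i=1}^{N} b_i^{p}$.

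First I would establish the weak log-majorization $\prod_{i=1}^{k} a_i \le \prod_{i=1}^{k} b_i$ for every $k=1,\dots,N$. The cleanest route uses the $k$-th compound (exterior-power) matrix $\wedge^{k}(\cdot)$. Two standard facts are needed: the largest singular value of $\wedge^{k}M$ equals the product of the $k$ largest singular values of $M$, i.e. $\sigma_1(\wedge^{k}M)=\prod_{i=1}^{k}\sigma_i(M)$; and the Cauchy--Binet formula gives $\wedge^{k}(AB^{T})=(\wedge^{k}A)(\wedge^{k}B)^{T}$. Combining these with submultiplicativity and orthogonal invariance of the operator norm yields $\prod_{i=1}^{k}\sigma_i(M)=\|\wedge^{k}M\|_{\mathrm{op}}\le\|\wedge^{k}A\|_{\mathrm{op}}\,\|\wedge^{k}B\|_{\mathrm{op}}=\prod_{i=1}^{k}\sigma_i(A)\prod_{i=1}^{k}\sigma_i(B)$, which is precisely $\prod_{i=1}^{k}a_i\le\prod_{i=1}^{k}b_i$.

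Next I would convert this multiplicative relation into the desired additive one. Taking logarithms, $\prod_{i=1}^{k}a_i\le\prod_{i=1}^{k}b_i$ says that $(\log a_i)\prec_{w}(\log b_i)$ in the sense of weak majorization (both sequences are already in descending order). I then apply the map $\phi(t)=e^{pt}$, which is increasing and convex for every $p>0$, and invoke the standard majorization principle that an increasing convex function preserves weak majorization: hence $\sum_{i=1}^{k}\phi(\log a_i)\le\sum_{i=1}^{k}\phi(\log b_i)$, i.e. $\sum_{i=1}^{k}a_i^{p}\le\sum_{i=1}^{k}b_i^{p}$. Specializing to $k=N$ gives the claim. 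It is worth stressing that although $t\mapsto t^{p}$ is itself concave for $0<p<1$, the relevant map on the logarithmic scale, $e^{pt}$, is convex for all $p>0$; this is exactly why log-majorization, rather than ordinary majorization, is the correct intermediary and why the inequality holds uniformly in $p$.

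The two steps are each clean, so the main obstacle is confined to two technical points. The first is the justification of $\sigma_1(\wedge^{k}M)=\prod_{i=1}^{k}\sigma_i(M)$, which follows by pushing the SVD of $M$ through $\wedge^{k}$: since $\wedge^{k}$ sends orthogonal matrices to orthogonal matrices and a diagonal matrix to the diagonal matrix of all $k$-fold products of its entries, the largest such product is $\prod_{i=1}^{k}\sigma_i(M)$. The second, and the place where genuine care is needed, is the presence of zero singular values: when some $\sigma_i(A)$ or $\sigma_i(B)$ vanishes the logarithms become $-\infty$ and the weak-majorization conversion does not apply directly. I would resolve this by perturbing $A$ and $B$ to replace their zero singular values by $\varepsilon>0$, applying the argument in the strictly positive regime, and then letting $\varepsilon\to0$; continuity of the singular values in the matrix entries, together with continuity of $t\mapsto t^{p}$ at $0$ for $p>0$, preserves the inequality in the limit.
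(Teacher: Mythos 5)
The paper offers no proof of this lemma at all---it is imported verbatim from \citep{topics}[Theorem 3.3.14(c)]---so the only meaningful comparison is with the standard proof in that reference, which your argument faithfully reproduces: first the multiplicative (weak log-majorization) inequality $\prod_{i=1}^{k}\sigma_i(AB^T)\le\prod_{i=1}^{k}\sigma_i(A)\,\sigma_i(B)$ via compound matrices, Cauchy--Binet, and submultiplicativity of the operator norm, then the lift to $p$-th power sums using the fact that the increasing convex map $t\mapsto e^{pt}$ preserves weak majorization. Your proof is correct as written, including the two points where care is genuinely needed---the identity $\sigma_1(\wedge^{k}M)=\prod_{i=1}^{k}\sigma_i(M)$ justified through the SVD, and the $\varepsilon$-perturbation plus continuity of singular values that patches the $\log 0$ degeneracy when zero singular values occur---so there is no gap to report.
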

	For the completed proof of Theorem \ref{theorem1}, please refer to the Supplementary Material\footnote{All proofs in this paper are in Supplementary Material.}. By extending Theorem \ref{theorem1}  to multiple factors, we have \footnote{When we were preparing the camera ready version of this paper, Shang et al. told us that they also got the same result independently \citep{multiformulation}. But their proof still followed that in \citep{scalableAlgorithm, aistats}.}: 
	\begin{corollary} \label{corollary2}(\textbf{Multi-Schatten-$p$ Norm Surrogate})
		Given $I~(I \geq 2) $ matrices $X_{i},~i=1,\ldots, I,$ where $X_1 \in  \mathbb{R}^{m \times d_1}$, $X_{i} \in \mathbb{R}^{d_{i-1} \times d_{i}},  i=2,\ldots, I-1, $ $X_I \in  \mathbb{R}^{d_I \times n}$, and $X \in \mathbb{R}^{m \times n}$ with $rank(X)=r \leq \min \{d_i, i=1,\ldots, I\}$, for any $p, p_1, \ldots,p_I > 0$ satisfying ${1}/{p}= \sum_{1}^I 1/p_i$, we have 		
		\begin{equation} \label{general2}
			\begin{split}
				\frac{1}{p} \|X\|_{S_p}^p
				= \min_{X_i:X=\prod_{i=1}^{I}X_i} \sum_{i=1}^{I} ~\frac{1}{p_i}\|X_i\|_{S_{p_i}}^{p_i}.
			\end{split} 
		\end{equation}
	\end{corollary}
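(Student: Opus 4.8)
The plan is to prove the equality by establishing the two matching inequalities separately, reducing everything to the two-factor statement of Theorem~\ref{theorem1}. Throughout I use that Schatten norms are transpose-invariant, $\|A\|_{S_p}=\|A^T\|_{S_p}$, so that the single-product convention $X=\prod_i X_i$ used here and the $X=UV^T$ convention of Theorem~\ref{theorem1} are interchangeable. The base case $I=2$ is exactly Theorem~\ref{theorem1}, and I proceed by induction on the number of factors $I$.

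For the lower bound I will show that \emph{every} admissible factorization satisfies $\sum_{i=1}^I \frac{1}{p_i}\|X_i\|_{S_{p_i}}^{p_i} \ge \frac{1}{p}\|X\|_{S_p}^p$, with no rank hypothesis needed. Given $X=\prod_{i=1}^I X_i$, group the tail as $Y=\prod_{i=2}^I X_i\in\mathbb{R}^{d_1\times n}$ and set $q$ by $1/q=\sum_{i=2}^I 1/p_i = 1/p-1/p_1$. The inductive hypothesis (its inequality direction) gives $\sum_{i=2}^I \frac{1}{p_i}\|X_i\|_{S_{p_i}}^{p_i}\ge \frac{1}{q}\|Y\|_{S_q}^q$, while the inequality direction of Theorem~\ref{theorem1} applied to $X=X_1 Y$ (with $1/p=1/p_1+1/q$) gives $\frac{1}{p_1}\|X_1\|_{S_{p_1}}^{p_1}+\frac{1}{q}\|Y\|_{S_q}^q\ge\frac{1}{p}\|X\|_{S_p}^p$. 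Chaining these two yields the claimed lower bound for the sum.

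For the upper bound it suffices to exhibit one factorization attaining $\frac{1}{p}\|X\|_{S_p}^p$; here the rank condition $r\le\min_i d_i$ enters. Let $X=U_X\Sigma V_X^T$ be the reduced SVD with $\Sigma=\mathrm{diag}(\sigma_1,\dots,\sigma_r)$, and for each $i$ let $P_i=[\,\bi_r\;\;\bzero\,]\in\mathbb{R}^{r\times d_i}$, so $P_iP_i^T=\bi_r$ since $d_i\ge r$. Define $X_1=U_X\Sigma^{p/p_1}P_1$, $X_i=P_{i-1}^T\Sigma^{p/p_i}P_i$ for $2\le i\le I-1$, and $X_I=P_{I-1}^T\Sigma^{p/p_I}V_X^T$. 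Because $P_iP_i^T=\bi_r$ the product telescopes, and since $\sum_i p/p_i = p\sum_i 1/p_i = 1$ the diagonal powers collapse to $\Sigma$, giving $\prod_i X_i = U_X\Sigma V_X^T = X$. Each $X_i$ has singular values $\{\sigma_j^{\,p/p_i}\}_{j=1}^r$ (its left and right factors having orthonormal columns/rows), so $\|X_i\|_{S_{p_i}}^{p_i}=\sum_j\sigma_j^{\,p}=\|X\|_{S_p}^p$, and summing the weights $1/p_i$ produces exactly $\frac{1}{p}\|X\|_{S_p}^p$. Together with the lower bound this closes the induction.

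The delicate points, and where I would spend the most care, are bookkeeping rather than conceptual. First, I must confirm that the inequality direction of Theorem~\ref{theorem1} (which ultimately rests on Lemma~\ref{lemma1}) holds for \emph{all} conforming $X_1, Y$ irrespective of how $\mathrm{rank}(X)$ compares to the inner dimension, so that the lower-bound induction never invokes a rank hypothesis; the rank condition is reserved solely for achievability. Second, I must verify the singular-value claim for the middle factors $X_i=P_{i-1}^T\Sigma^{p/p_i}P_i$, i.e. that sandwiching a diagonal matrix between matrices with orthonormal columns/rows leaves the singular values unchanged, which follows by inspecting $X_i^T X_i = P_i^T\Sigma^{2p/p_i}P_i$. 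These are the only steps requiring attention; the genuine analytic content is already encapsulated in Theorem~\ref{theorem1}.
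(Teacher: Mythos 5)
Your proof is correct and takes essentially the same route as the paper's: the lower bound by propagating the two-factor inequality of Theorem~\ref{theorem1} across the factors, and the upper bound via the SVD-based factorization $X_1=U_X\Sigma_X^{p/p_1}$, $X_i=\Sigma_X^{p/p_i}$, $X_I=\Sigma_X^{p/p_I}V_X^T$. The only difference is that you carry out details the paper merely asserts, namely spelling out the induction for the inequality (correctly observing that this direction needs no rank hypothesis, since any admissible factorization already forces $\mathrm{rank}(X)\leq\min_i d_i$) and handling unequal dimensions $d_i$ via the padding matrices $P_i$, where the paper instead assumes $d_i=d$ without loss of generality.
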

	For the ease of  computation, we assume that $d_i = d,~i=1,\ldots, I$ in the rest of this paper. Special cases of Corollary \ref{corollary2} include $(p=1/I, p_i=1,~ i=1,\ldots, I)$, e.g., Tri-Nuclear norm surrogate in \citep{aistats}, and   $(p=2/I, p_i=2, ~ i=1,\ldots, I)$. When $I=2$, Corollary \ref{corollary2} reduces to Theorem \ref{theorem1}. Corollary \ref{corollary2}  can be proved by induction, using Theorem \ref{theorem1}. In fact, the two do not just differ in the number of factors.  Corollary \ref{corollary2} enables us to choose some particular $p_i$ values which endow the operator $\| \cdot\|_{S_{p_i}}^{p_i}$ with nice properties, especially when $0<p< 1$, the case that we are mainly interested in.
	
	\begin{proposition}
		For any $0 < p<1$, there always exist $I \in N$ and  $p_i$ such that $1/p =  \sum_i^{I} 1/ p_i$, where all $p_i$ satisfy one of the cases:
		(a)  $p_i\geq 1$ or (b) $p_i > 1$.
	\end{proposition}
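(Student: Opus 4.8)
The plan is to prove the statement by an explicit construction, observing that the constraint $1/p = \sum_{i=1}^I 1/p_i$ fixes only the \emph{total} of the reciprocals $1/p_i$, leaving us free to choose both the number of factors $I$ and how to distribute this total. I would write $t = 1/p$, so that the hypothesis $0 < p < 1$ is equivalent to $t > 1$, and then exhibit a decomposition that spreads $t$ evenly across $I$ equal factors.

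First I would treat case (a). I would set $I = \lceil t \rceil$, which is an integer satisfying $I \geq t > 1$ (hence automatically $I \geq 2$), and take all factors equal, $p_i = I/t$ for $i = 1, \ldots, I$. Then $\sum_{i=1}^I 1/p_i = I \cdot (t/I) = t = 1/p$, so the required identity holds, while $p_i = I/t \geq 1$ because $I \geq t$. This establishes (a), which is the regime where each surrogate term $\|\cdot\|_{S_{p_i}}^{p_i}$ is convex.

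For case (b) I would simply enlarge $I$ so as to force strict inequalities: take any integer $I > t$, for instance $I = \lfloor t \rfloor + 1$, and again set $p_i = I/t$ for all $i$. The identity $\sum_{i=1}^I 1/p_i = 1/p$ is verified exactly as before, and now $p_i = I/t > 1$ because $I > t$, yielding (b), the regime where every term is convex \emph{and} smooth.

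There is no genuine obstacle here; the only point demanding care is the boundary behavior, which is precisely what separates the two cases. In case (a) the choice $I = \lceil t \rceil$ may give $I = t$ exactly (when $1/p$ happens to be an integer), producing $p_i = 1$, which is admissible under $p_i \geq 1$ but fails the strict requirement of (b). That is why case (b) insists on $I$ strictly larger than $t$, which is always available since $\lfloor t \rfloor + 1 > t$ for every real $t > 1$. I would close by remarking that equal distribution is merely the simplest admissible choice among infinitely many, so both assertions follow immediately from the single linear constraint on the reciprocals.
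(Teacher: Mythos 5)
Your proof is correct, and your case~(b) is in fact identical to the paper's: the paper sets $I=\lfloor 1/p\rfloor+1$ and $p_i=Ip$ for all $i$, which is exactly your uniform choice $p_i=I/t$ with $t=1/p$ and $I=\lfloor t\rfloor+1$. Where you diverge is case~(a): you again distribute $t$ evenly, taking $I=\lceil t\rceil$ and $p_i=\lceil t\rceil/t\geq 1$, whereas the paper keeps $p_i=1$ for the first $I-1$ factors and absorbs the fractional remainder into a single last factor $p_I=1/\bigl(1/p-\lfloor 1/p\rfloor\bigr)>1$ (using all $p_i=1$ outright when $1/p$ is an integer). Both decompositions verify the single linear constraint equally easily, so mathematically the two are interchangeable; the difference is what they buy downstream. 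The paper's lopsided choice makes almost every factor a nuclear norm, whose proximal mapping \eqref{sigmaproximal} is closed-form soft-thresholding, and this is precisely the configuration ($p_i=1$) used by MSS in the experiments; your uniform choice produces a generic fractional exponent $p_i=\lceil t\rceil/t$ for every factor, which is still convex (so the extrapolation scheme \eqref{hatx} applies) but whose scalar proximal subproblems generally lack closed forms, and which is rational only when $p$ is --- relevant because Theorem~4 assumes rational $p_i$ for the semi-algebraic argument. Your observation about the boundary ($p_i=1$ exactly when $1/p\in\mathbb{N}$, forcing strict inequality via $I>t$ in case~(b)) is the same point the paper handles by splitting case~(a) into the two subcases, so nothing is missing; your argument is simply a different, equally valid point on the ``infinitely many admissible distributions'' you correctly identify.
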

	\begin{figure}[ht]
		\centering
		\includegraphics[width=0.5\linewidth, height=0.4\linewidth]{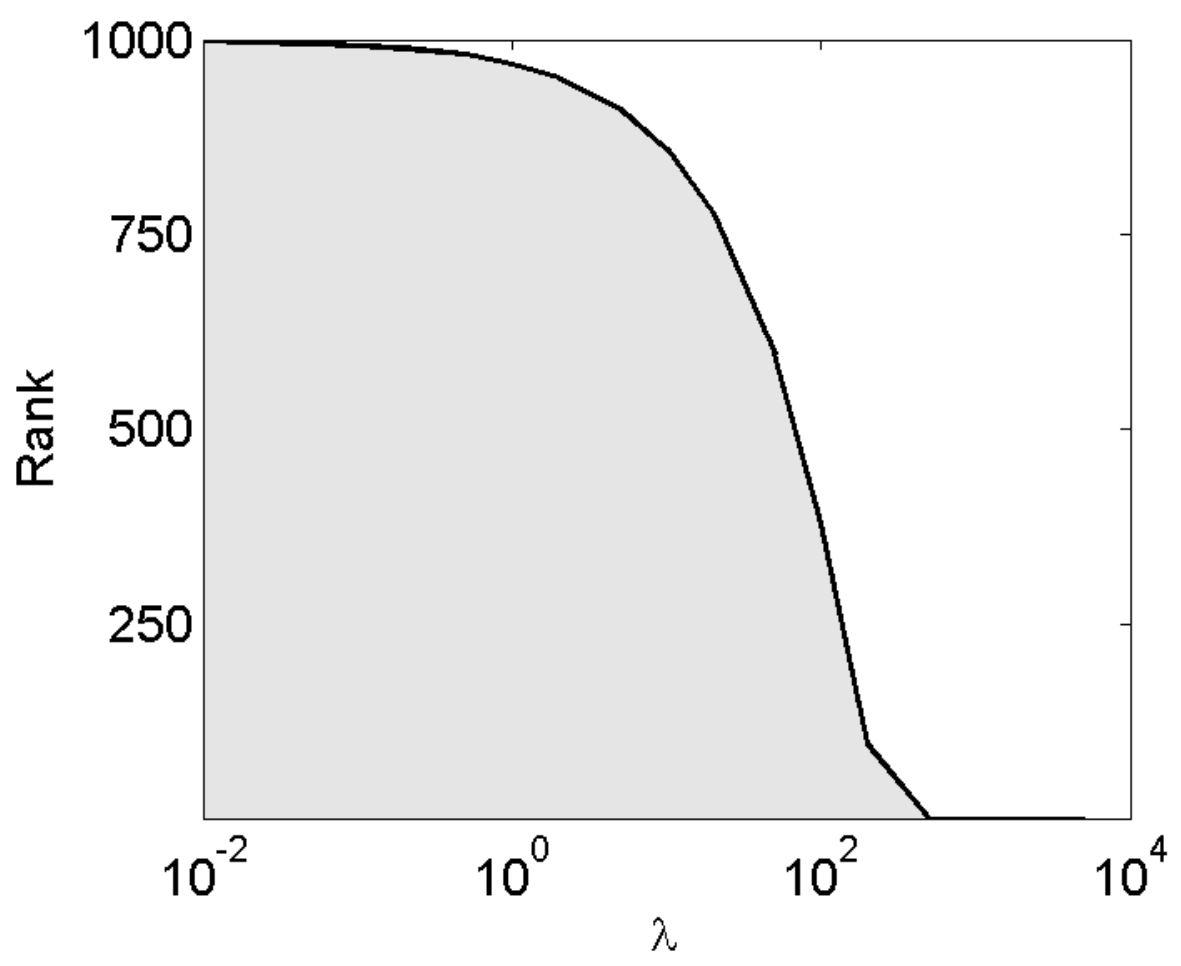}
		\caption{Region of equivalence between \eqref{schattenreg} and \eqref{schattenfac} on proximal mapping \eqref{proximadefine} with $Y$ being a $1000  \times 1000$ random matrix and $p=1/2$. When \eqref{schattenfac} is initialized in the white area, it is equivalent to the result obtained with \eqref{schattenreg} (black line).
			When the rank is known a priori, better reconstruction results can
			be found in the grey area, by using the factorization formulation \citep{unifying}.}
		\label{contained} 
	\end{figure}  
	When ensuring condition (a), the operator $\| \cdot\|_{S_{p_i}}^{p_i}$ becomes convex as mentioned before. In the later experiments, we will show that such convexity can be employed for acceleration.  When ensuring condition (b), the operator becomes differentiable. In other words, we transform the original \textit{non-smooth} function into a \textit{smooth} one. Thus it is possible to utilize some gradient-based methods, which give much freedom to the optimization. 
	%When ensuring condition (c), the operator becomes twice-differentiable. If the loss function $f(\cdot)$ is further twice-differentiable, we can then employ the second-order information to compute a local minima of the objective function. As Burner and Monteiro \cite{sdp} pointed out, the local minima of low rank semi-definite programming (SDP) is guaranteed to be global minima of SDP once some conditions are satisfied. Although it remains unclear whether we can extend the result to multi-linear factorization compared with the original bilinear  factorization with  $\|U\|_F^2$ and $\|V\|_F^2$, it is still very interesting to reach the local minimum from the theoretical point of view.  By contrast, the best convergence result  for the original $\|\cdot \|_{S_p}^p$ ($0<p<1$) is converging to critical points. 
	
	By substituting the spectral regularization in \eqref{original} with the Schatten-$p$ norm, we have  
	\begin{equation}\label{schattenreg}
		\min_X F(X)=  \min_X f(X) + \frac{\lambda}{p} \|X\|_{S_p}^p. 
	\end{equation} 
	According to Corollary \ref{corollary2}, by rewriting $X$ in the multi-linear form as \eqref{general2}, the above problem becomes 
	\begin{equation}\label{schattenfac}
		\min_{\mathcal{X}} F(\mathcal{X})=  \min_{X_i,~i=1,\ldots, I} f\left(\prod_{i=1}^{I}X_i)\right) + \sum_{i=1}^{I} ~\frac{\lambda}{p_i}\|X_i\|_{S_{p_i}}^{p_i},
	\end{equation} 
	where $1/p=\sum_{1}^{I}1/p_i$ and $X_i$ is defined as Corollary \ref{corollary2}.  $\mathcal{{X}}= (X_1,X_2,\ldots,X_I)$  denotes the set of all unknown $X_i$'s. By mild modification on \citep{spectralreg}[Theorem 3] for the Bi-Frobenius norm surrogate, we have the following connections between \eqref{schattenreg} and \eqref{schattenfac}: 
	
	%Note that Corollary \ref{corollary2} can be proved by induction under Theorem \ref{theorem1} 
	%%When $I=2$, the two reduce to be equivalent. 
	%In this paper, we are mainly interested in $p < 1$. 
	%And for any $p<1$, there always exit $p_i 
	%\geq 1$ or even further $ p_i > 1$ $(\forall i \in [1,I])$ satisfying $1/p =  \sum_i^{I} 1/ p_i$  
	% %And any $1/p$ with $p<1$ can be factorized into multiple $1/p_i$ ($p_i \geq 1$ or even further $> 1$) satisfying $1/p = \sum_i^{I} 1/ p_i$
	%%For any $p < 1$, we can always factorize $1/p$ into multiple $p_i$   all satisfying $p_i \geq 1$ (or even $p_i >1$).
	%\footnote{Denote $[1/p]$ as the, largest integer less than $1/p$. We can choose $p_i=1,~1 \leq i \leq  [1/p]$ and $p_I=1/(1/p-[1/p])$. For $p_i>1$ case, we can further factorize $p_i=1$ as Lemma \ref{bifro}.}. In other words, the original \textit{nonsmooth} and \textit{nonconvex} operator $\|\cdot\|_{S_p}^{p}$ with $p<1$ can be represented as the sum of several \textit{convex} operators ($p_i \geq 1$) or even further \textit{smooth} ones $(p_i > 1)$. By exploiting such nice properties, one may design more  efficient algorithms for the Schappten-$p$ norm based problems.  
	%As Corollary \ref{corollary2} is general enough to include Theorem \ref{theorem1}, in the left of this paper, we mainly consider the Multi-Schatten-$p$ norm surrogate.
	
	\begin{theorem}\label{equivalence} 
		Suppose $\hat{X}^*$ is a solution to \eqref{schattenreg}, and let $r^*$ be its rank. If $d \geq r^*$, the solutions of \ref{schattenreg} and \ref{schattenfac} are equivalent.  For any solution $\hat{\mathcal{X}}$ to \eqref{schattenfac}, $\prod_{i=1}^{I} \hat{X}_i$ is a solution to \eqref{schattenreg}. On the other hand, the SVD of $\hat{X}^*=\hat{U}_X^*\hat{\Sigma}_X^*\hat{V}_X^{*T}$ provides one such solution to \eqref{schattenfac} with $\hat{X}_1= \hat{U}_X^*\hat{\Sigma}_X^{* p/p_1}$, $\hat{X}_i=\hat{\Sigma}_X^{* p/p_i}, i=2,\ldots, I-1$, and $\hat{X}_I=\hat{\Sigma}_X^{* p/p_I}\hat{V}_X^{*T}$. 
	\end{theorem}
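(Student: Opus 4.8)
The plan is to regard the factorized problem \eqref{schattenfac} as a two-stage minimization and to collapse the inner stage via Corollary \ref{corollary2}. Writing $X=\prod_{i=1}^I X_i$, I would split the minimization over the factors $\{X_i\}$ into an outer minimization over the product matrix $X$ and an inner minimization over all factorizations realizing a fixed $X$:
\begin{equation*}
\min_{\mathcal{X}} F(\mathcal{X}) = \min_{X} \left[ f(X) + \lambda \min_{X_i:\,X=\prod_{i=1}^I X_i} \sum_{i=1}^I \frac{1}{p_i}\|X_i\|_{S_{p_i}}^{p_i} \right].
\end{equation*}
Because every product $\prod_{i=1}^I X_i$ of factors with inner dimension $d$ has rank at most $d$, and conversely any $X$ with $rank(X)\le d$ admits such a factorization, the outer variable ranges exactly over $\{X: rank(X)\le d\}$. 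On that set Corollary \ref{corollary2} applies and replaces the inner minimum by $\frac{1}{p}\|X\|_{S_p}^p$, so \eqref{schattenfac} becomes the rank-restricted version of \eqref{schattenreg}, namely $\min_{X:\,rank(X)\le d} f(X)+\frac{\lambda}{p}\|X\|_{S_p}^p$.

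Next I would establish that the two optimal values coincide by sandwiching. Since the rank-restricted feasible set is contained in the full space, the optimal value of \eqref{schattenreg} is no larger than that of \eqref{schattenfac}. For the reverse inequality I would use the hypothesis $d\ge r^*$: the assumed solution $\hat X^*$ has $rank(\hat X^*)=r^*\le d$, so it is feasible in the rank-restricted problem and attains the value of \eqref{schattenreg} there. Combining the two inequalities shows the optimal values of \eqref{schattenreg} and \eqref{schattenfac} are equal. This is exactly where the rank condition $d\ge r^*$ is essential, and I expect the bookkeeping around the rank constraint (ensuring the factorized form neither loses nor gains reachable matrices) to be the main, though not deep, obstacle.

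Finally I would transfer the optimizers in both directions. For any solution $\hat{\mathcal{X}}$ of \eqref{schattenfac}, put $\hat X=\prod_{i=1}^I \hat X_i$, which satisfies $rank(\hat X)\le d$; since the minimum in Corollary \ref{corollary2} lower-bounds the value of this particular factorization (the $\le$ direction being Lemma \ref{lemma1}), we get $\frac{1}{p}\|\hat X\|_{S_p}^p \le \sum_{i=1}^I \frac{1}{p_i}\|\hat X_i\|_{S_{p_i}}^{p_i}$, so $f(\hat X)+\frac{\lambda}{p}\|\hat X\|_{S_p}^p$ is no larger than the common optimal value; as $\hat X$ is feasible for \eqref{schattenreg}, equality must hold and $\hat X$ solves \eqref{schattenreg}. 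For the converse, I would verify that the explicit factors built from the SVD $\hat X^*=\hat U_X^*\hat\Sigma_X^*\hat V_X^{*T}$ attain equality in Corollary \ref{corollary2}: their product telescopes to $\hat U_X^*\,\hat\Sigma_X^{*\,p(\sum_i 1/p_i)}\,\hat V_X^{*T}=\hat X^*$ because $\sum_i 1/p_i=1/p$, and the $i$-th factor has singular values $\sigma_j(\hat X^*)^{p/p_i}$, so $\|\hat X_i\|_{S_{p_i}}^{p_i}=\sum_j \sigma_j^p(\hat X^*)=\|\hat X^*\|_{S_p}^p$ for every $i$; weighting by $1/p_i$ and summing gives $\sum_{i=1}^I \frac{1}{p_i}\|\hat X_i\|_{S_{p_i}}^{p_i}=\frac{1}{p}\|\hat X^*\|_{S_p}^p$. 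Hence this factorization attains the common optimal value and solves \eqref{schattenfac}, completing the equivalence.
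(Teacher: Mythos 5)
Your proof is correct and takes essentially the same route as the paper's: both collapse the inner minimization over factorizations via Corollary \ref{corollary2}, thereby reducing \eqref{schattenfac} to a rank-restricted version of \eqref{schattenreg}, and then use $d \geq r^*$ to remove the rank restriction. If anything, your bookkeeping is tighter than the paper's --- the paper writes the intermediate constraint as $\mathrm{rank}(X)=r^*$ where your $\mathrm{rank}(X)\leq d$ is the accurate description of the reachable set, and the paper leaves the transfer of optimizers (your sandwich argument and the explicit verification that the SVD-based factors attain the common optimal value) implicit under the phrase ``equivalence of the criteria.''
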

	% (a) Let $ d= \min\{m,n\}$. Then the solutions to \eqref{schattenreg} and \eqref{schattenfac} coincide for all $\lambda>0$. (b) Suppose $\hat{X}^*$ is a solution to \eqref{schattenreg} for $\lambda^* >0 $, and let $r^*$ be its rank. Then for any solution $\hat{X}_i, i=1,\ldots,I$ to \eqref{schattenfac} with $d \geq r^*$ and $\lambda =\lambda^*$, $\prod_{i=1}^{I} \hat{X}_i$ is a solution to \eqref{schattenreg}. On the other hand, the SVD factorization of $\hat{X}^*$ provide one such solution to \eqref{schattenfac}. This implies that the solution space of \eqref{schattenreg} in contained in that of \eqref{schattenfac}.

	The matrix factorization formulation \eqref{schattenfac} defines a bi-parameterized family of models indexed by $(d, \lambda)$, while the spectral penalty formulation \eqref{schattenreg} defines a uni-parameterized family. As Theorem \ref{equivalence} indicates, this family is a special path in the two-dimensional grid of solutions $\mathcal{\hat{X}}_{d,\lambda}$. Figure \ref{contained} shows the relationship. In real applications, it often occurs that the intrinsic rank $ r \ll \min\{m,n\}$.  When $r$ is unknown, we may overestimate $d \geq r$,  while $d  \ll \min\{m,n\}$ still holds. When $r$ is known a priori, e.g., Structure from Motion in computer vision, better reconstruction results can be found by setting $d=r$ in \eqref{schattenfac} than using \eqref{schattenreg} \citep{unifying}. In addition, the factorization formulation requires far less memory on the unknowns $\left( \mathcal{O}\left(d(m+n)\right) \ll \mathcal{O}(mn) \right)$. It also avoids the SVD computation on the full matrix, whose cost is as large as $\mathcal{O}(\min\{m,n\}mn)$.  For $0 < p <1$, \eqref{schattenreg} is a non-smooth and non-convex problem. In contrast, by choosing some appropriate $p_i$'s as discussed earlier, \eqref{schattenfac} can be smooth\footnote{If $f(\cdot)$ is also smooth.} (although still non-convex) and hold some good properties for acceleration. In conclusion, it is preferable to model the Schatten-$p$ norm based problem as \eqref{schattenfac} instead of \eqref{schattenreg}.    
	
	\section{Optimization on Matrix Completion}
	In this section, as a concrete example we consider solving the matrix completion problem.  Then problem \eqref{schattenfac} can be written as follows:
	\begin{equation} \label{matrixcompletion}
	\begin{split}
		\min_{\mathcal{X}} F(\mathcal{X}) = \min_{X_i,~i=1,\ldots, I} &\frac{1}{2} \left\|W \odot \left(M - \prod_{i=1}^{I}X_i\right) \right\|_F^2 \\& +\sum_{i=1}^{I} ~\frac{\lambda}{p_i}\left\|X_i \right\|_{S_{p_i}}^{p_i},
	\end{split}
	\end{equation}
	which is a non-convex problem, where $M\in \mathbb{R}^{m \times n}$ is the low rank measurement matrix.
	$W$ is a 0-1 binary mask with the same size as $M$. The entry
	value of $W$ being $0$ means that the component at the same
	position in $M$ is missing, and $1$ otherwise. The operator
	$\odot$ is the Hadamard element-wise product.  By utilizing the smoothness of the first part in \eqref{matrixcompletion}, we use the PALM proposed in \citep{palm}, which can  also be regarded as block coordinate descent (BCD) of Gauss-Seidel type. At each iteration, PALM minimizes $F$  cyclically over each of $X_1,\ldots, X_I$ while fixing the remaining blocks at their last updated values. Let $X^k_i$ denote the value of $X_i$ at the $k$-th update, $A_{-i}^k=X^{k}_1 \cdots X^{k}_{i-1}$ and $A_{+i}^{k-1}=X^{k-1}_{i+1}\cdots X^{k-1}_{I}$.  Then we can represent the first part of each subproblem minimizing $F$ as follows: 
	\begin{equation}
		f_i^{k}(X_i)=\frac{1}{2}\|W \odot(M-A_{-i}^k  X_i A_{+i}^{k-1})\|_F^2.
	\end{equation} 
	By further linearizing $f_i^k(X_i)$ at some point $\hat{X}^{k-1}_i$, the subproblem becomes    
	\begin{equation} \label{matrixsubproblems}
		\begin{split}
		\min_{X_i} & \left<\nabla f_i^k(\hat{X}_i^{k-1}), X_i-\hat{X}_i^{k-1}\right> \\ &+ \frac{L_i^{k-1}}{2}\|X_i-\hat{X}_i^{k-1}\|_F^2 + ~\frac{\lambda}{p_i}\|X_i\|_{S_{p_i}}^{p_i},
		\end{split}
	\end{equation}  
	which can be formulated as the proximal mapping \eqref{proximadefine} and solved efficiently or even in closed-form solution for specific $p_i$ values. $\nabla f_i^k (\hat{X}_i^{k-1})$ is the gradient of $f_i^k(X_i)$ at $\hat{X}_i^{k-1}$:
	\begin{equation} \label{matrixgradient}
	\begin{split}
		&\nabla f_i^k(\hat{X}_i^{k-1})= \\&(A_{-i}^k)^T \left(W \odot(M-A_{-i}^k \hat{X}_i^{k-1} A_{+i}^{k-1})\right)(A_{+i}^{k-1})^T.
	\end{split}
	\end{equation}        
	$L_i^{k-1}$ is the Lipschitz constant of $\nabla f_i^{k}(X_i)$:  
	\begin{equation}\label{matrixl}
		L_i^{k-1}=\max\{\|A_{-i}^{k}\|_2^2 \|A_{+i}^{k-1}\|_2^2 , \epsilon \},
	\end{equation} 
	where $\|A\|_2$ denotes the largest singular value of $A$ and $\epsilon >0$ is some small constant ensuring that $L^{k-1}_i$ is bounded away from $0$ for convergence. For the basic version of PALM, we let $\hat{X}_i^{k-1}= X_i^{k-1}$. When $p_i \geq 1$, the linearized subproblem \eqref{matrixsubproblems} becomes convex. The acceleration technique proposed by \citet{abcd} can then be adopted, where $\hat{X}_i^{k-1}$ is extrapolated as
	\begin{equation}\label{hatx}
		\hat{X}_i^{k-1}=X_i^{k-1}+ w_i^{k-1} (X_i^{k-1}-X_i^{k-2}),
	\end{equation} 
	where $w_i^{k-1}$ is defined as
	\begin{equation}\label{matrixw}
		w_i^{k-1}= \min \left\{\frac{t_{k-1}-1}{t_k}, 0.9999 \sqrt{\frac{L_i^{k-2}}{L_i^{k-1}}}\right\}
	\end{equation}
	with $t_0=1$ and $t_k=\left(1+\sqrt{1+4t_{k-1}^2}\right)/2$.
	\renewcommand{\algorithmicrequire}{\textbf{Input:}}
	\renewcommand{\algorithmicensure}{\textbf{Output:}}
	\begin{algorithm}[t]
		\caption{Minimizing $F(\mathcal{X})$ in \eqref{matrixcompletion} with accelerated PALM.}
		\label{algorithm}
		\begin{algorithmic}[1]
			\REQUIRE  $k=1$ and $X_i^{-1}=X_i^{0}, i= 1,\ldots, I.$
			\WHILE{ not converged}
			\FOR{ $i= 1,2, \ldots, I$}
			\STATE Compute $L_i^{k-1}$ as \eqref{matrixl} and  $w_i^{k-1}$ as \eqref{matrixw}.
			\STATE Update $\hat{X}_{i}^{k-1}$ as \eqref{hatx}.
			\STATE Update $X_i^k$ by solving \eqref{matrixsubproblems}.
			\ENDFOR
			\IF{$F(\mathcal{X}^k) \geq F(\mathcal{X}^{k-1}) $ }
			\STATE Reupdate $X_i^k$ by solving \eqref{matrixsubproblems}  with $\hat{X}_i^{k-1}= X_i^{k-1},~~ i=1, \ldots, I. $
			\ENDIF
			\STATE $k=k+1$
			\ENDWHILE
			%\IF{ stopping criterion is satisfied}
			%\STATE Return $\mathcal{X}^k$
			%\ENDIF
			\ENSURE  The factors $(X_1, \ldots, X_I)$
		\end{algorithmic}
	\end{algorithm}
	
		\begin{figure*}[ht!]
			\centering
			\includegraphics[width=0.5\linewidth]{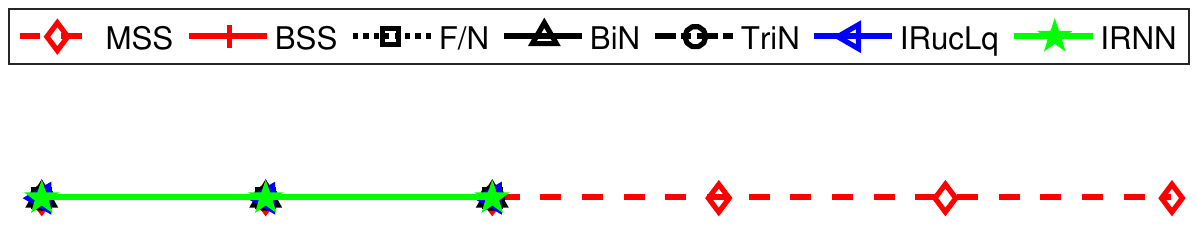} {\vspace{0.07in}}
			\begin{tabular}{cccc}
				%  		 		{\hspace{-13pt}} \includegraphics[width=0.33\linewidth,height=0.3\linewidth]{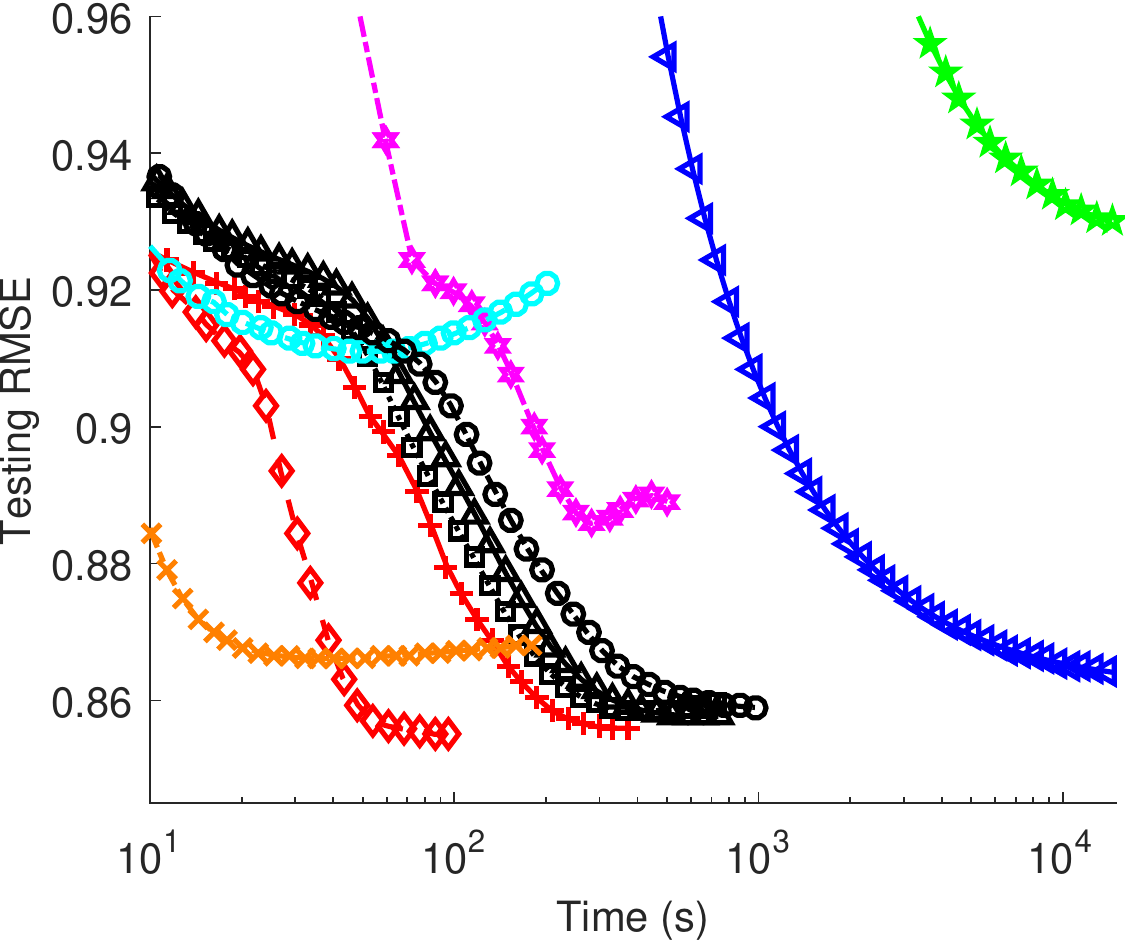}&
				% 		 		{\hspace{-13pt}} \includegraphics[width=0.33\linewidth,height=0.3\linewidth]{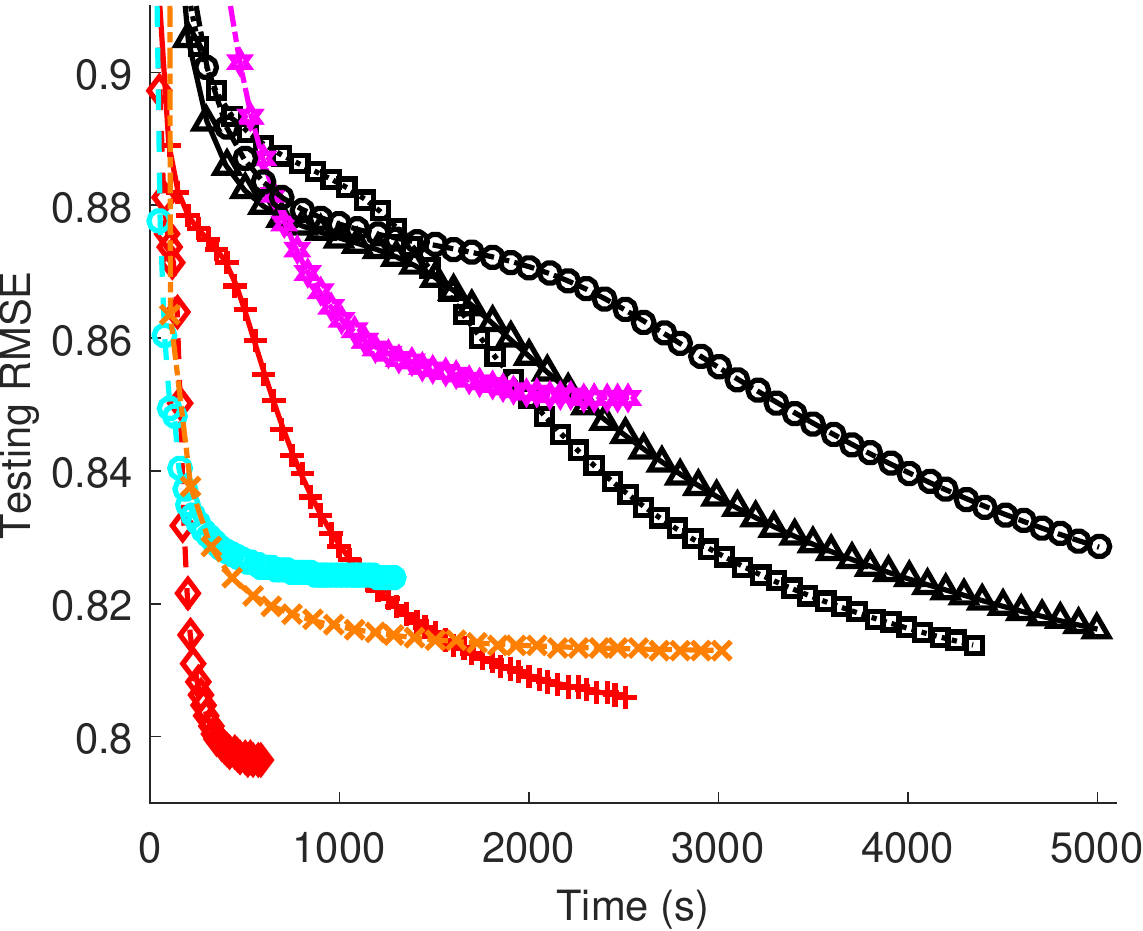}&
				% 		 		{\hspace{-13pt}} \includegraphics[width=0.33\linewidth,height=0.3\linewidth]{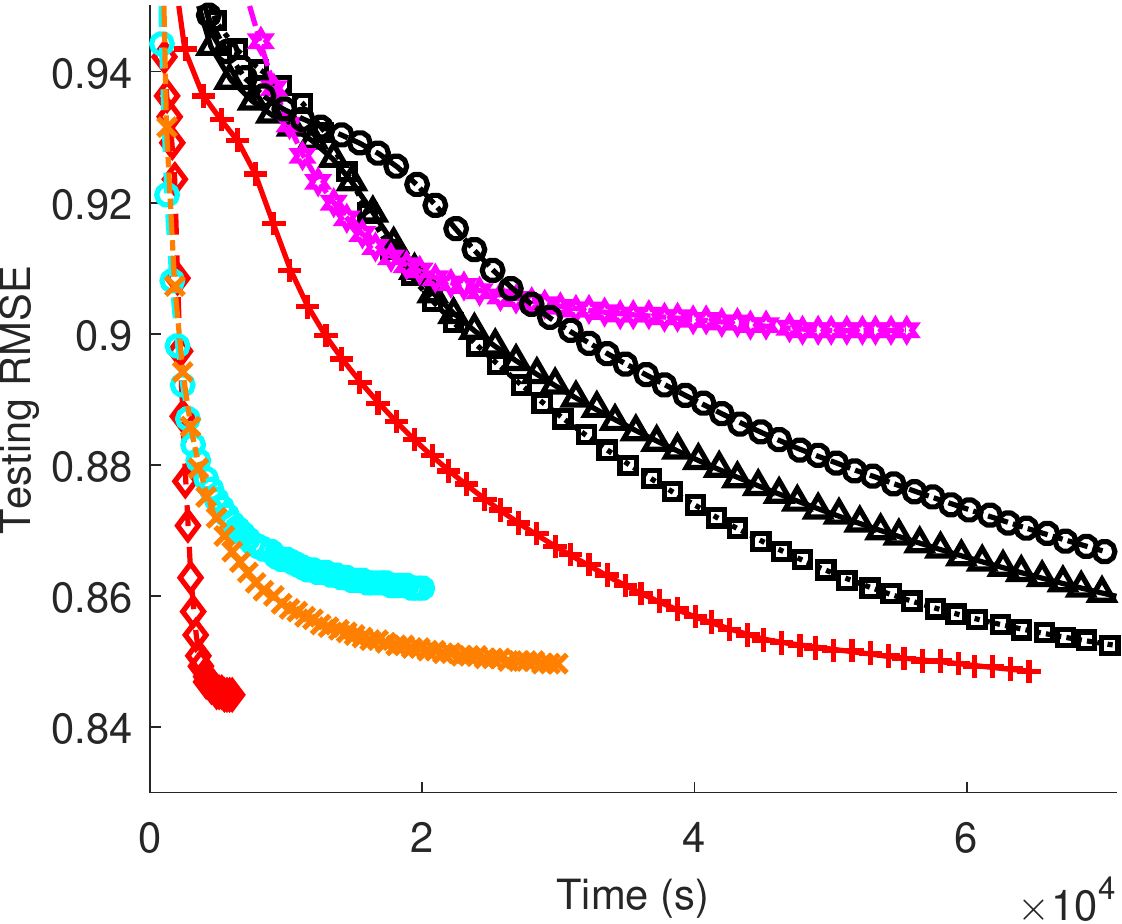}
				% 		 		{\vspace{-3.5pt}}\\
				{\hspace{-13pt}} \includegraphics[width=0.24\linewidth,height=0.21\linewidth]{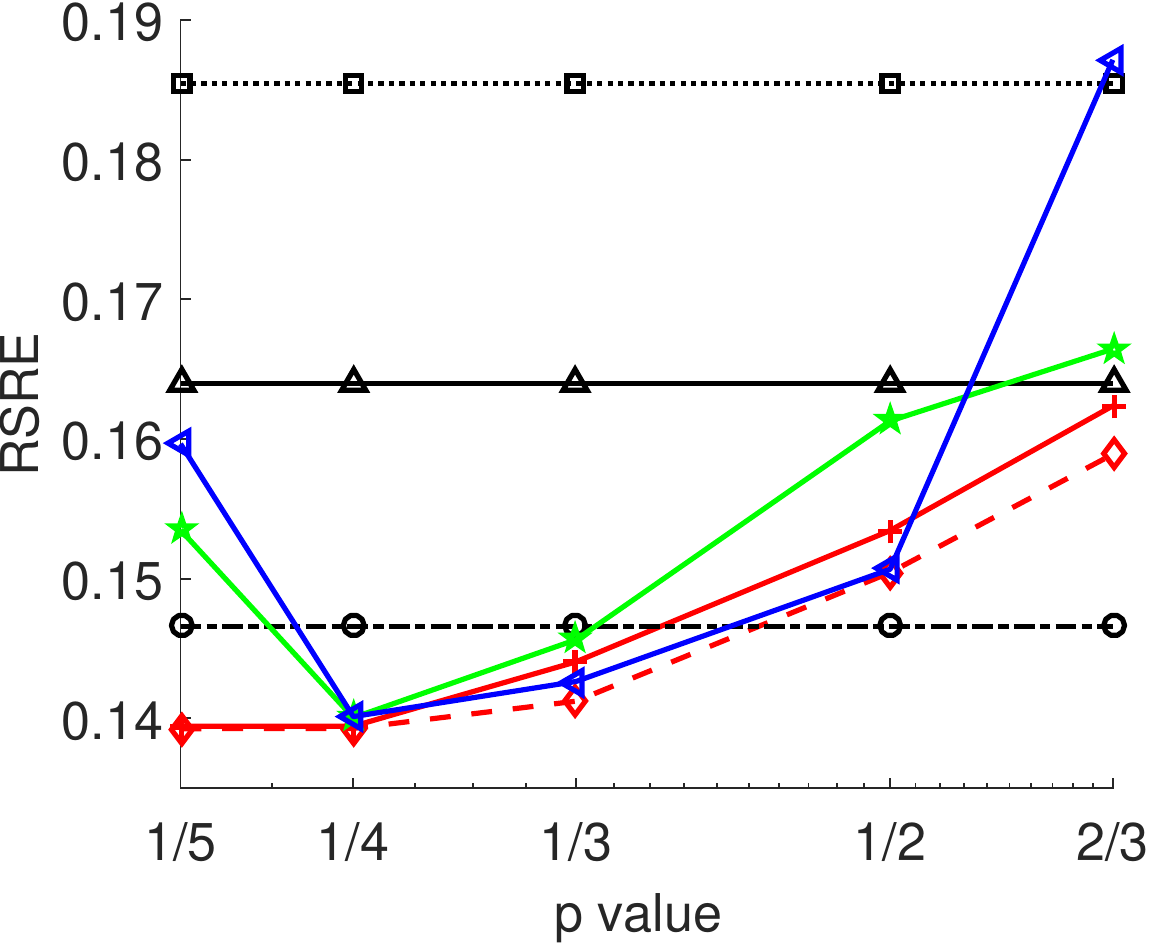}&
				{\hspace{-13pt}} \includegraphics[width=0.24\linewidth,height=0.21\linewidth]{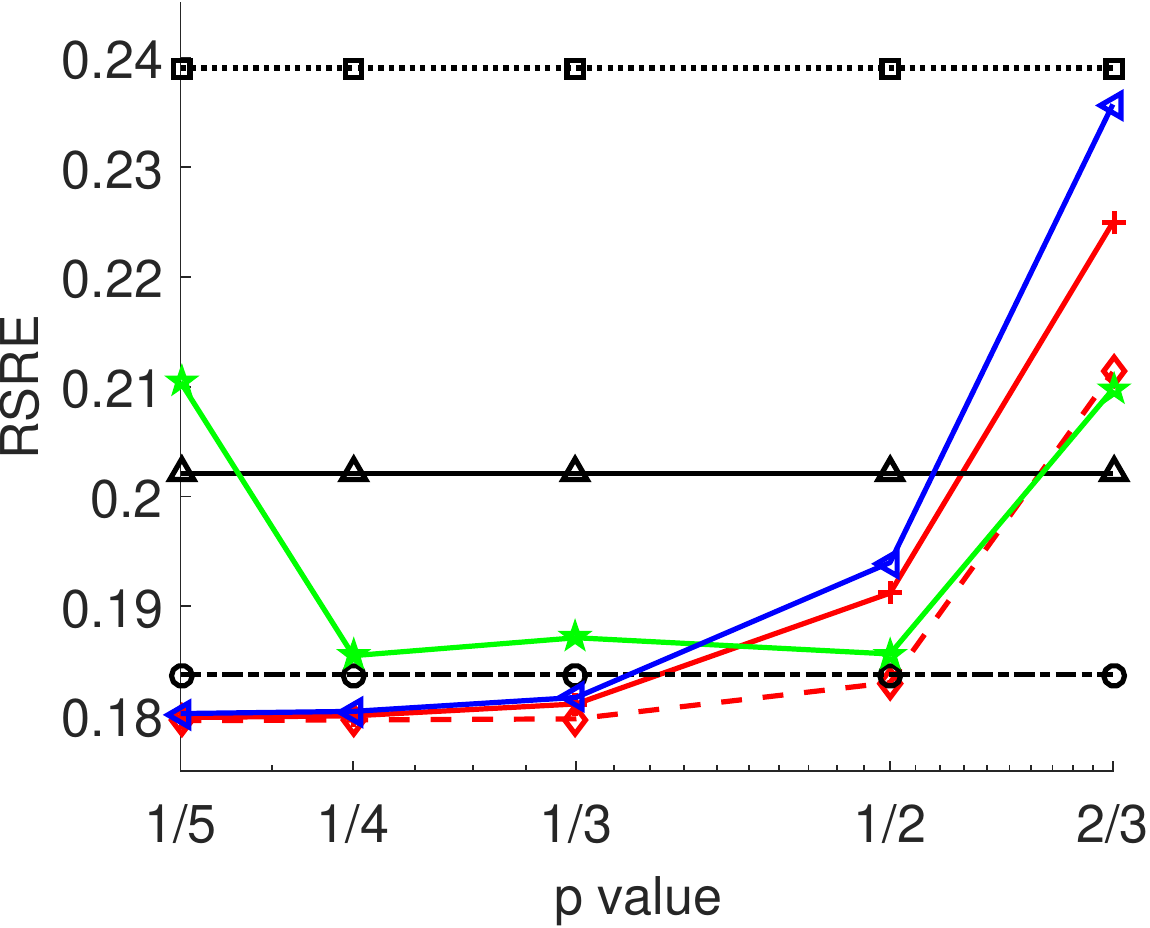}&
				{\hspace{-13pt}} \includegraphics[width=0.24\linewidth,height=0.21\linewidth]{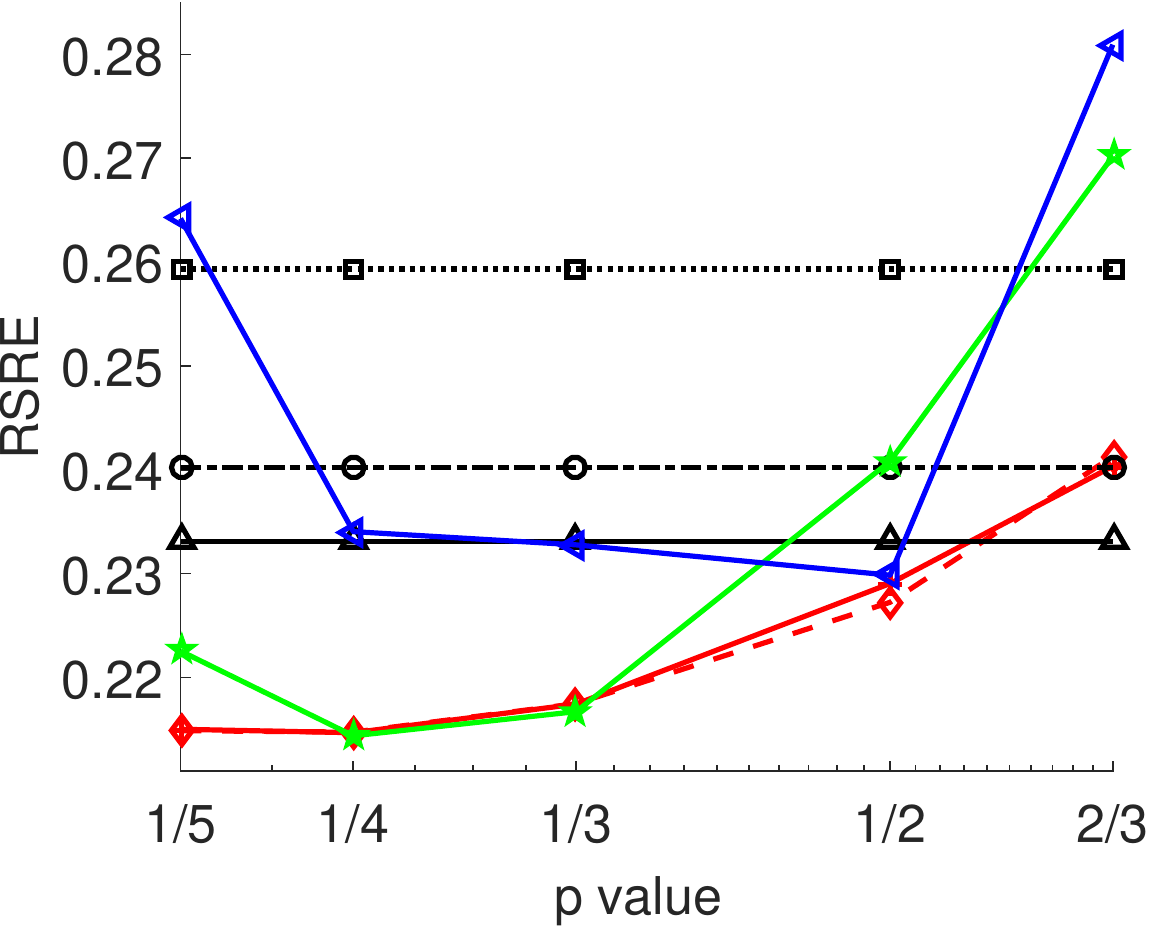}&
				{\hspace{-13pt}} \includegraphics[width=0.24\linewidth,height=0.21\linewidth]{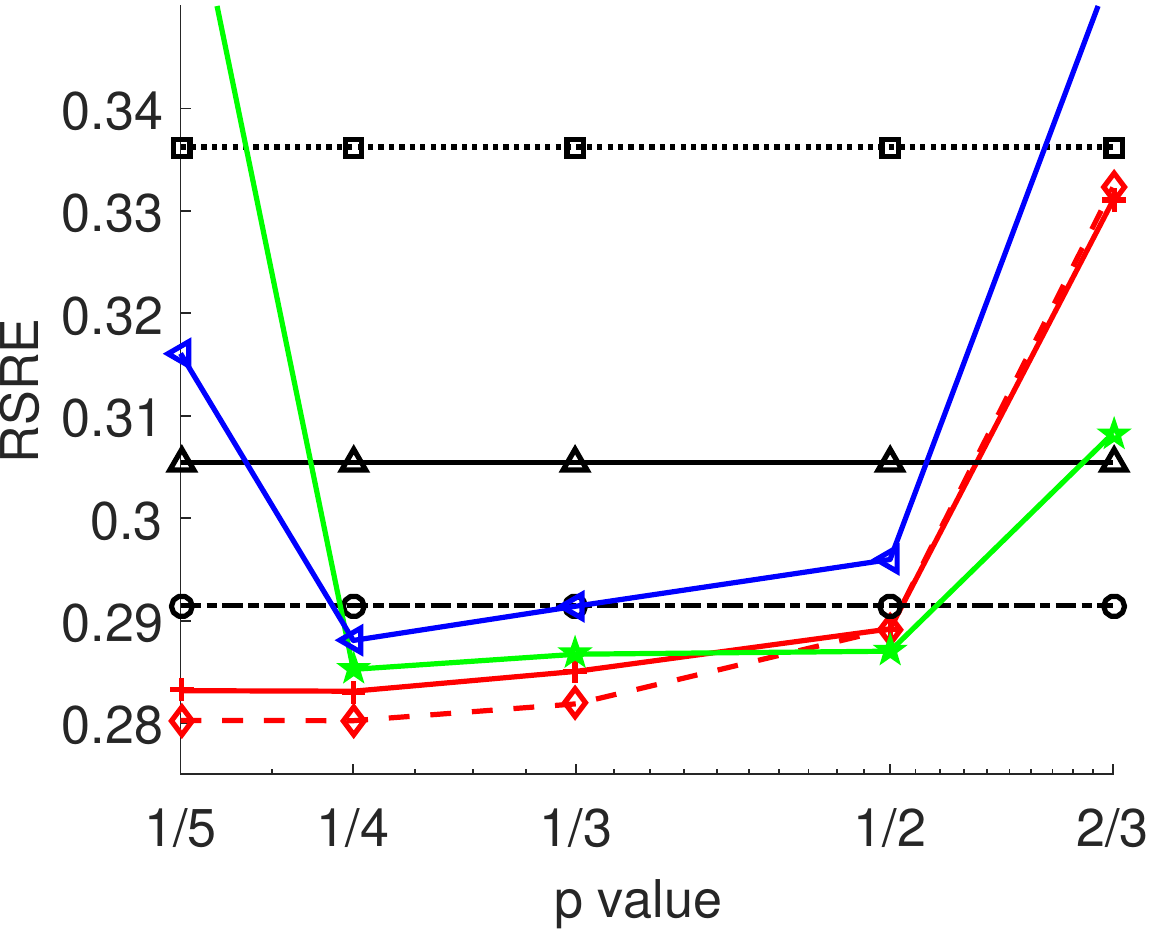}
				{\vspace{-3.5pt}}\\
				\scriptsize{\hspace{-13pt}} (a) $\sigma=0.3, o\%=20\%$ &  \scriptsize{\hspace{-13pt}} (b) $\sigma=0.4, o\%=20\%$ &
				\scriptsize{\hspace{-13pt}} (c) $\sigma=0.3, o\%=15\%$ & \scriptsize{\hspace{-13pt}} (d) $\sigma=0.4, o\%=15\%$ \\
			\end{tabular}\vspace{-2mm}
			\caption{Synthetic experiments on the Schatten-$p$ norm regularized algorithms, i.e.,  our MSS and BSS, F/N \citep{scalableAlgorithm}, BiN \citep{scalableAlgorithm}, TriN \citep{aistats}, IRucLq \citep{iruclq}, and IRNN \citep{irnn}, with varying $p$ values, noise magnitude $\sigma$, and observed data percentage $o\%$. As F/N, BiN, and TriN are only for $p=2/3$, $p=1/2$, and $p=1/3$, respectively, we plot them across different $p$ values for comparison with others.}\vspace{-3mm}
			\label{synthetic}
		\end{figure*}
	
	For better reference, we summarize the algorithm for minimizing  $F(\mathcal{X})$ in \eqref{matrixcompletion} in Algorithm \ref{algorithm} . The running time is dominated by performing matrix multiplications. The total time complexity is $\mathcal{O}(mnd)$, where $d \ll \min \{m,n\}$. We terminate the algorithm when all the magnitudes of gradients \eqref{matrixgradient} over the Lipschitz constants in \eqref{matrixl}  are below a threshold.
	%when the relative changes of all unknown variables  are below a predefined threshold. 
	For further acceleration, we adopt the backtracking continuation technique \citep{nnls} to find a proper local Lipschitz constant instead of the global ones as shown in \eqref{matrixl}. Namely, we initially underestimate $L_{i}^{k-1}$ by multiplying a factor $\rho<1$.  We then increase $\rho$ gradually along the iteration until it approaches the upper bound, i.e., $1$. As pointed out in \citep{mmnonconvex}, such a technique can further improve the quality of the solution for non-convex optimization. Note that the inner factors are of much smaller size while taking almost the same updating cost as the side factors, i.e.,  $X_1$ and $X_I$. When the number of factors $I$ becomes big, the redundancy on the inner factors may weaken the effectiveness on decreasing the objective. Thus we modify Algorithm \ref{algorithm} by updating only one or two inner factors in a shuffling order in each cycle (lines $2-6$).  The convergence result remains unchanged, which is as follows:
%	As the inner factor co Thus it is unwise to update all of them in each cycle . In this paper, we only update  one inner or two factors in a shuffling order in each main iterate.

%	As the inner factors are of much smaller size compared to $X_1$ and $X_I$ ($d \leq \min(m,n)$), we update them once every 
%	When the number of factors $I$ is greater than $3$, we update   
%	We also shuffle the order of updating $X_i$ instead of the cyclical way for better performance when $I >2$. As in \citep{nonconvexabcd}, the convergence result remains unchanged, which is as follows:
	% can much reduce the number of iteration especially when measurement matrix $M$ is highly sparse
	\begin{theorem} (Sequence Convergence) 
		Let $\{(X_1^{k},\ldots, X_I^{k})\}$ be a sequence generated by Algorithm \ref{algorithm} with all $p_i > 0$ being rational, then it is a Cauchy sequence and converges to a critical point of \eqref{matrixcompletion}.   
	\end{theorem}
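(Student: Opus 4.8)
The plan is to place this result inside the Kurdyka--\L ojasiewicz (KL) framework of \citep{palm}: a bounded sequence that obeys a sufficient-decrease inequality together with a relative-error (subgradient) bound converges to a single critical point whenever the objective is a KL function. Accordingly I must verify three things: (i) that $F$ in \eqref{matrixcompletion} is a proper, lower semi-continuous KL function; (ii) that the sequence generated by Algorithm \ref{algorithm} is bounded and satisfies the sufficient-decrease and relative-error conditions; and (iii) that a subsequential limit point is critical with matching function value. Then the abstract theorem of \citep{palm} delivers finite length, hence the Cauchy property and convergence.

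For (i), the data-fitting term $\frac{1}{2}\|W\odot(M-\prod_{i=1}^I X_i)\|_F^2$ is a polynomial in the entries of the factors, hence real analytic and semi-algebraic. Each penalty $\frac{\lambda}{p_i}\|X_i\|_{S_{p_i}}^{p_i}=\frac{\lambda}{p_i}\sum_j \sigma_j^{p_i}(X_i)$ is semi-algebraic precisely when $p_i$ is rational: the singular values are algebraic functions of the entries (roots of the characteristic polynomial of $X_i^TX_i$), and raising them to a rational power $a_i/b_i$ is encoded by the polynomial relation $t^{b_i}=\sigma^{a_i}$. This is exactly where the rationality hypothesis is used. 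A finite sum of semi-algebraic functions is semi-algebraic, and every semi-algebraic function satisfies the KL inequality, so $F$ is a KL function. Moreover each penalty is coercive, since $\|X_i\|_{S_{p_i}}^{p_i}\to\infty$ as $\|X_i\|_F\to\infty$, so the sublevel sets of $F$ are bounded; this will yield boundedness of the iterates once monotonicity is established.

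For (ii), I would exploit the monitoring step (lines 7--9 of Algorithm \ref{algorithm}): whenever the extrapolated update fails to decrease $F$, the algorithm recomputes every block with $\hat X_i^{k-1}=X_i^{k-1}$, i.e. a plain proximal-linearized sweep. Since $L_i^{k-1}$ in \eqref{matrixl} upper-bounds the block Lipschitz constant of $\nabla f_i^k$, the proximal descent lemma gives a block-wise decrease of order $\frac{L_i^{k-1}}{2}\|X_i^k-X_i^{k-1}\|_F^2$; summing over the Gauss--Seidel sweep yields $F(\mathcal{X}^{k-1})-F(\mathcal{X}^k)\ge c\|\mathcal{X}^k-\mathcal{X}^{k-1}\|_F^2$ for some $c>0$, so $\{F(\mathcal{X}^k)\}$ is non-increasing and bounded below by $0$ and the iterates stay in a fixed sublevel set, hence are bounded. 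The relative-error bound comes from the first-order optimality conditions of the subproblems \eqref{matrixsubproblems}: rearranging them exhibits an element of $\partial F(\mathcal{X}^k)$ assembled from $\nabla f_i^k(X_i^k)-\nabla f_i^k(\hat X_i^{k-1})$ and $L_i^{k-1}(X_i^k-\hat X_i^{k-1})$, whose norm is controlled by $\|\mathcal{X}^k-\mathcal{X}^{k-1}\|_F$ using the Lipschitz continuity of the gradients (uniform on the bounded sublevel set) and the boundedness of the extrapolation weights in \eqref{matrixw}.

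The main obstacle is reconciling the acceleration with the descent requirement: the extrapolated point $\hat X_i^{k-1}$ of \eqref{hatx} can make a pure accelerated step increase $F$, violating sufficient decrease. I expect the resolution to be precisely the safeguard of lines 7--9, which forces a monotone non-increasing objective by reverting to a guaranteed-descent step, so the extrapolated iterates only ever help. Care is then needed to keep the relative-error bound valid along the accelerated steps, where the subgradient is evaluated at $\hat X^{k-1}$ rather than $X^{k-1}$; here the cap $w_i^{k-1}\le 0.9999\sqrt{L_i^{k-2}/L_i^{k-1}}$ together with $\epsilon>0$ keeping $L_i^{k-1}$ bounded away from $0$ ensures the extrapolation displacement is comparable to $\|X_i^{k-1}-X_i^{k-2}\|_F$, so that all constants remain uniform. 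With (i)--(iii) established, the KL machinery of \citep{palm} gives that $\{\mathcal{X}^k\}$ has finite length, is therefore Cauchy, and converges to a critical point of \eqref{matrixcompletion}.
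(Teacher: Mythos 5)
Your static ingredients coincide exactly with the paper's: semi-algebraicity of $\|\cdot\|_{S_{p_i}}^{p_i}$ for rational $p_i$ via the algebraicity of singular values (this is the paper's Proposition \ref{schattensemi}, proved the same way), hence $F$ is a KL function; monotonicity plus the coercive penalties gives boundedness of the iterates (the paper's condition \eqref{bounded}); and the floor $\epsilon$ in \eqref{matrixl} together with boundedness gives uniform bounds $0<\ell\le L_i^{k-1}\le L<\infty$ (conditions \eqref{lp1}--\eqref{lp2}). Where you diverge is the dynamic part: the paper never verifies sufficient decrease or a relative-error bound itself. It observes that Algorithm \ref{algorithm} is an instance of the accelerated block prox-linear scheme of \citep{nonconvexabcd} and checks the hypotheses of its Theorem 2.7, which require only non-increase of $F(\mathcal{X}^k)$, essentially cyclic block updates, boundedness, semi-algebraicity, and the uniform Lipschitz bounds; the extrapolation and the monitor are absorbed into that cited theorem, whose proof internalizes the telescoping enabled by the $0.9999$ cap in \eqref{matrixw}. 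You instead try to verify the vanilla conditions of the PALM recipe in \citep{palm} by hand, and that is where the argument breaks.

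Two concrete gaps. First, sufficient decrease fails as you state it: on iterations where the extrapolated sweep is \emph{accepted}, the safeguard in lines 7--9 certifies only $F(\mathcal{X}^k)<F(\mathcal{X}^{k-1})$, with no lower bound proportional to $\|\mathcal{X}^k-\mathcal{X}^{k-1}\|_F^2$; monotonicity is strictly weaker than the sufficient-decrease condition of \citep{palm}, so the abstract theorem you invoke simply does not apply to those steps. Second, even for the safeguarded plain sweep, your claimed block-wise decrease of $\frac{L_i^{k-1}}{2}\|X_i^k-X_i^{k-1}\|_F^2$ requires the subproblem \eqref{matrixsubproblems} to be strongly convex, i.e. $p_i\ge 1$; for $p_i<1$ --- permitted by the theorem and actually used by BSS --- the prox weight equals the Lipschitz bound exactly, and the standard computation (global minimality of $X_i^k$ in \eqref{matrixsubproblems} plus the descent lemma) yields only $F(\mathcal{X}^k)\le F(\mathcal{X}^{k-1})$, the margin $\frac{t-L}{2}\|X_i^k-X_i^{k-1}\|_F^2$ vanishing at $t=L$. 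Relatedly, the relative-error bound you sketch along accelerated steps is inherently a two-step bound involving $\|\mathcal{X}^{k-1}-\mathcal{X}^{k-2}\|_F$, which the vanilla PALM theorem does not accommodate. Patching all of this amounts to redoing the Lyapunov-function/telescoping analysis for extrapolated block updates --- which is precisely what \citep{nonconvexabcd} provides; replacing your appeal to \citep{palm} with its Theorem 2.7, as the paper does, closes the argument using exactly the conditions you have already verified.
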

		Note that the sequence convergence is stronger than those in the existing general Schatten-$p$ solvers,
	e.g., IRucLq  \citep{iruclq} and  IRNN \citep{irnn}, where they only prove that any limit point is a stationary point, which is \emph{subsequence} convergent.
	
	%The global convergence says that initialized with any point, the algorithm would converge to a critical point, which is stronger than the convergence result of existing general Schatten-$p$ solvers, e.g., IRucLq  \citep{iruclq} and  IRNN \citep{irnn}, where they assume that there is a limit point for the sequence.
		
	\section{Experiments} \label{experiments}
	We test our framework in two variants: Bi-Schatten-$p$ norm Surrogate (BSS) with $p_1=p_2=2p$ and Multi-Schatten-$p$ norm Surrogate (MSS) with $p_i \geq 1$ and extrapolation as \eqref{hatx}.  We do not use extrapolation for BSS when the $p_1$ and $p_2 \geq 1$ in order to test  the effectiveness of this technique by comparing with MSS.  We compare them with several state-of-the-art algorithms for Schatten-$p$ norm regularized problems, i.e., F/N ($p=2/3$) \citep{scalableAlgorithm}, BiN ($p=1/2$)  \citep{scalableAlgorithm}, TriN ($p=1/3$) \citep{aistats}, IRNN\footnote{\url{https://sites.google.com/site/canyilu/}} \citep{irnn} and IRucLq \footnote{\url{http://www.math.ucla.edu/~wotaoyin/}} \citep{iruclq}. As IRNN and IRucLq are not suitable for large-scale  problems due to high computing costs and memory requirements, we also include the state-of-the-art solvers for matrix completion, i.e., NNLS\footnote{\url{http://www.math.nus.edu.sg/~mattohkc/NNLS.html}} \citep{nnls}, LMaFit\footnote{\url{http://lmafit.blogs.rice.edu/}} \citep{lmafit2}, and Soft-ALS\footnote{\url{http://cran.r-project.org/web/packages/softImpute/}} \citep{softals} on real datasets. We implement F/N, BiN, and TriN by ourselves, which also use PALM.  Their main differences from ours are that they do not  employ either the continuation technique or the extrapolated-based acceleration. We initialize all algorithms with the same random matrices.
	% and terminate our algorithms when the magtitude of gra changes of the unknowns are less than $10^{-4}$.
	All the codes are run in Matlab on a desktop PC with a $3.4$ GHz CPU and $20$ GB RAM.
	
	\subsection{Synthetic Data}
	We first generate synthetic data matrices $M=U_0V_0^T$, where $U_0 \in \mathbb{R}^{100\times 5}$ and $V_0 \in \mathbb{R}^{100\times 5}$. The entries of $U_0$ and $V_0$ are sampled i.i.d. from the standard Gaussian distribution $\mathcal{N}(0,1)$. Then $\mathcal{N}(0,\sigma)$ Gaussian noise is added independently to every entry of $M$ and  a portion ($\%$) of them are picked out uniformly as the observed data.  
	We conduct experiments by choosing different $p$ values, i.e., $p= 1/5, 1/4, 1/3, 1/2$, and $2/3$, varying the noise magnitude $\sigma= 0.3,0.4$ and  the observed data percentage $o\%=15\%, 20\%$. At each combination of these hyper parameters, we repeat the experiments $30$ times. All the compared algorithms use the same data in each trial.  
	%For BSS in bilinear matrix factorization, we set $p_1=p_2=2p$. 
	For MSS with multiple factors, we set all $p_i=1$ when $p=1/5, 1/4$, or  $1/3$ and all $p_i=2$ when $p=1/2$ or $2/3$\footnote{In fact, we have tested different numbers of factors (corresponding to different $p_i$'s) on the same $p$ while find no  difference in the measured performance.}. To cope with different Schatten-$p$ norm regularizers, the regularization parameter  $\lambda$ of all compared algorithms is tuned in the range $[1,20]$. As done in \cite{iruclq}, $d$ is overestimated as $3 \times 5=15$. 
	
	We use the relative square root error (RSRE), i.e., $\|X-U_0V_0^T\|_F/\|U_0V_0^T\|_F$,  to evaluate the performance of recovery.  The average values over all trails are shown in Fig. \ref{synthetic}. Among all the compared algorithms, our MSS achieves the most plausible performance, which reports the least or the second least RSREs in most cases. Besides, MSS shows less sensitivity to the change of $p$ values than other general Schatten-$p$ norm solvers, i.e., IRNN, IRucLq, and BSS.  At $p=2/3$ ($1/2$ or $1/3$), F/N (BiN or TriN) is inferior to our BSS and  MSS, which confirms the effectiveness of the continuation (and extrapolation for MSS) technique.  Across all the four subfigures, all the general Schatten-p norm solvers achieve reasonably good performance at $p=1/4$.  So in the rest of this paper, we only report the results with $p=1/4$.
	\begin{figure*}[tb]
		\centering
		\includegraphics[width=0.6\linewidth]{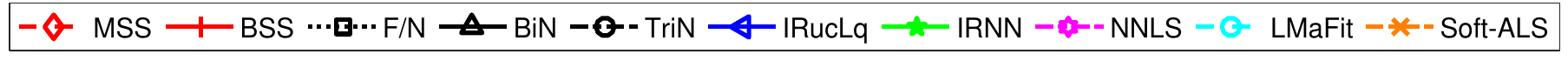} {\vspace{0.07in}}
		\begin{tabular}{ccc}
			{\hspace{-13pt}} \includegraphics[width=0.32\linewidth,height=0.29\linewidth]{pic/data2.pdf}&
			{\hspace{-13pt}} \includegraphics[width=0.32\linewidth,height=0.29\linewidth]{pic/data3.pdf}&
			{\hspace{-13pt}} \includegraphics[width=0.32\linewidth,height=0.29\linewidth]{pic/data4.pdf}
			{\vspace{-3.5pt}}\\
			
			{\hspace{-13pt}} \includegraphics[width=0.32\linewidth,height=0.29\linewidth]{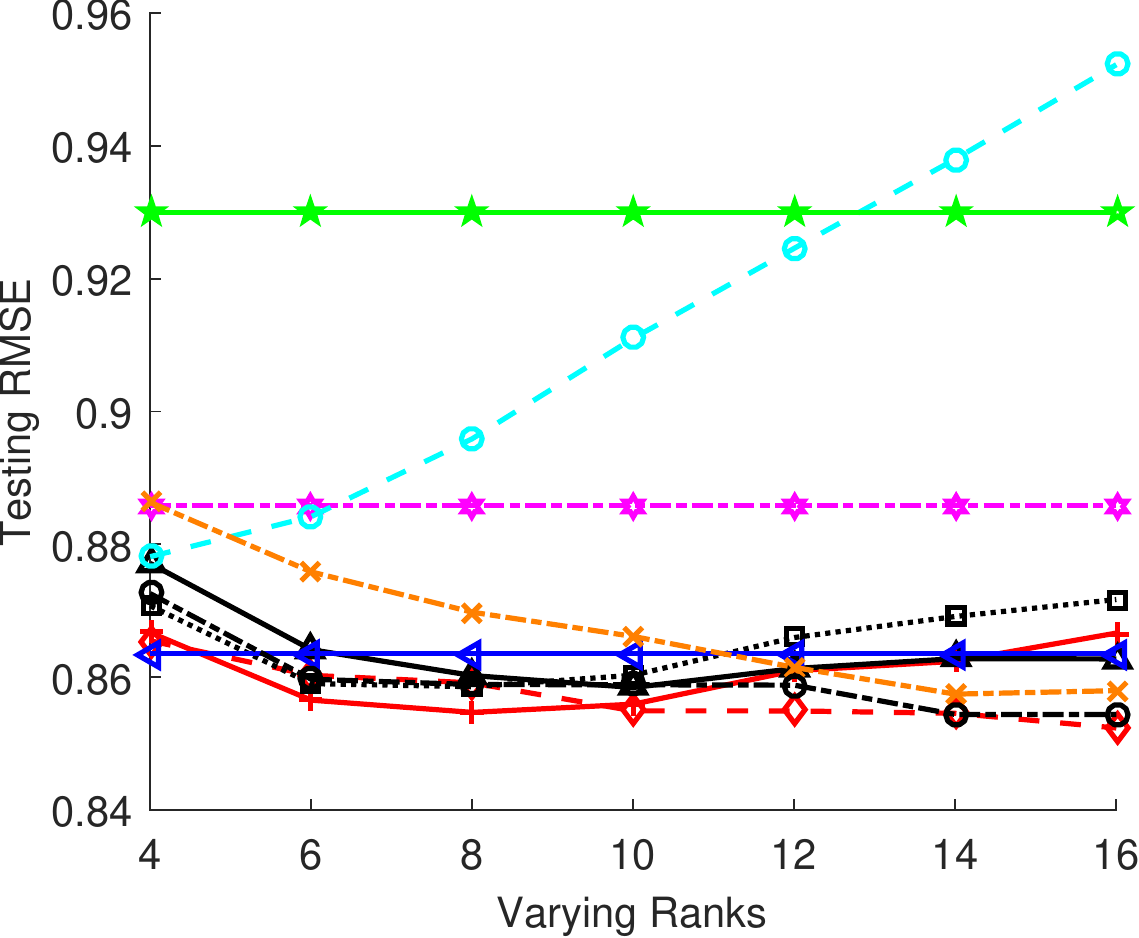}&
			{\hspace{-13pt}} \includegraphics[width=0.32\linewidth,height=0.29\linewidth]{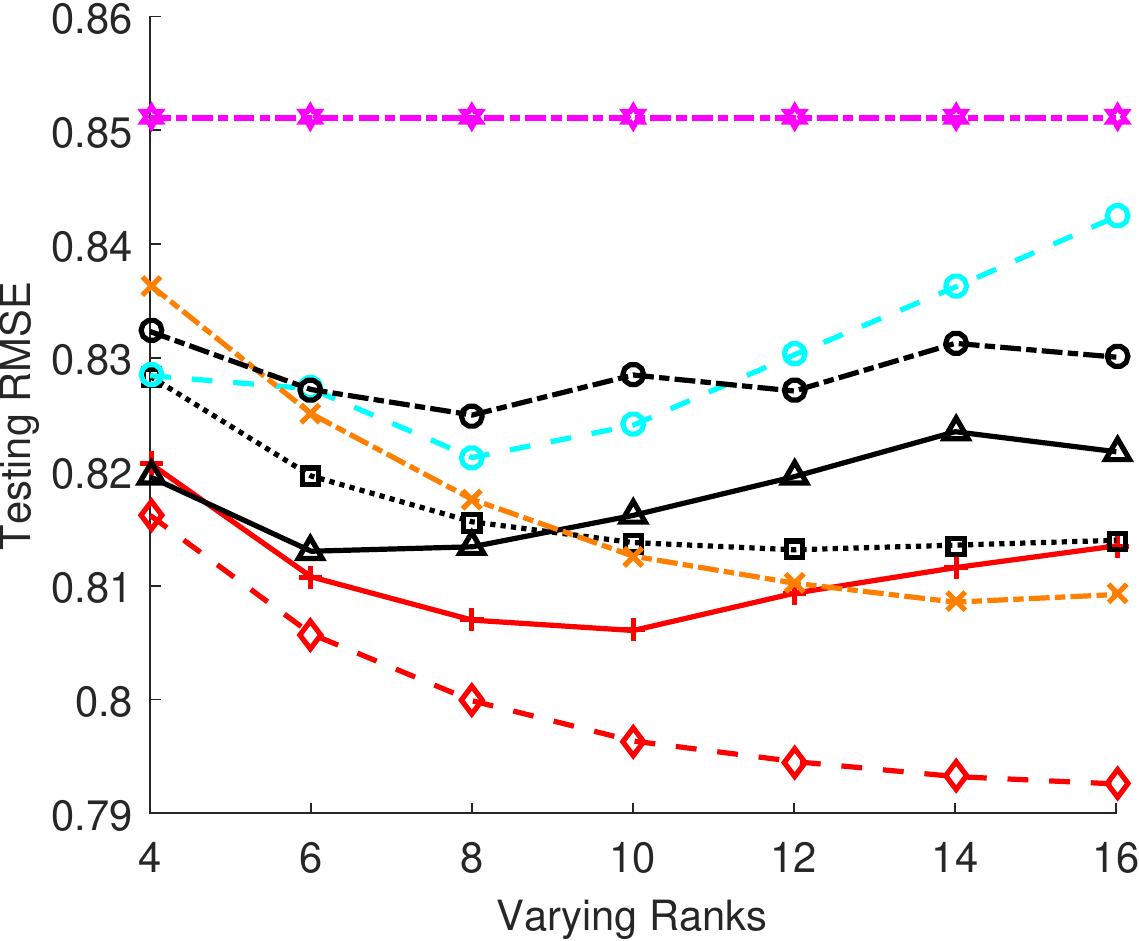}&
			{\hspace{-13pt}} \includegraphics[width=0.32\linewidth,,height=0.29\linewidth]{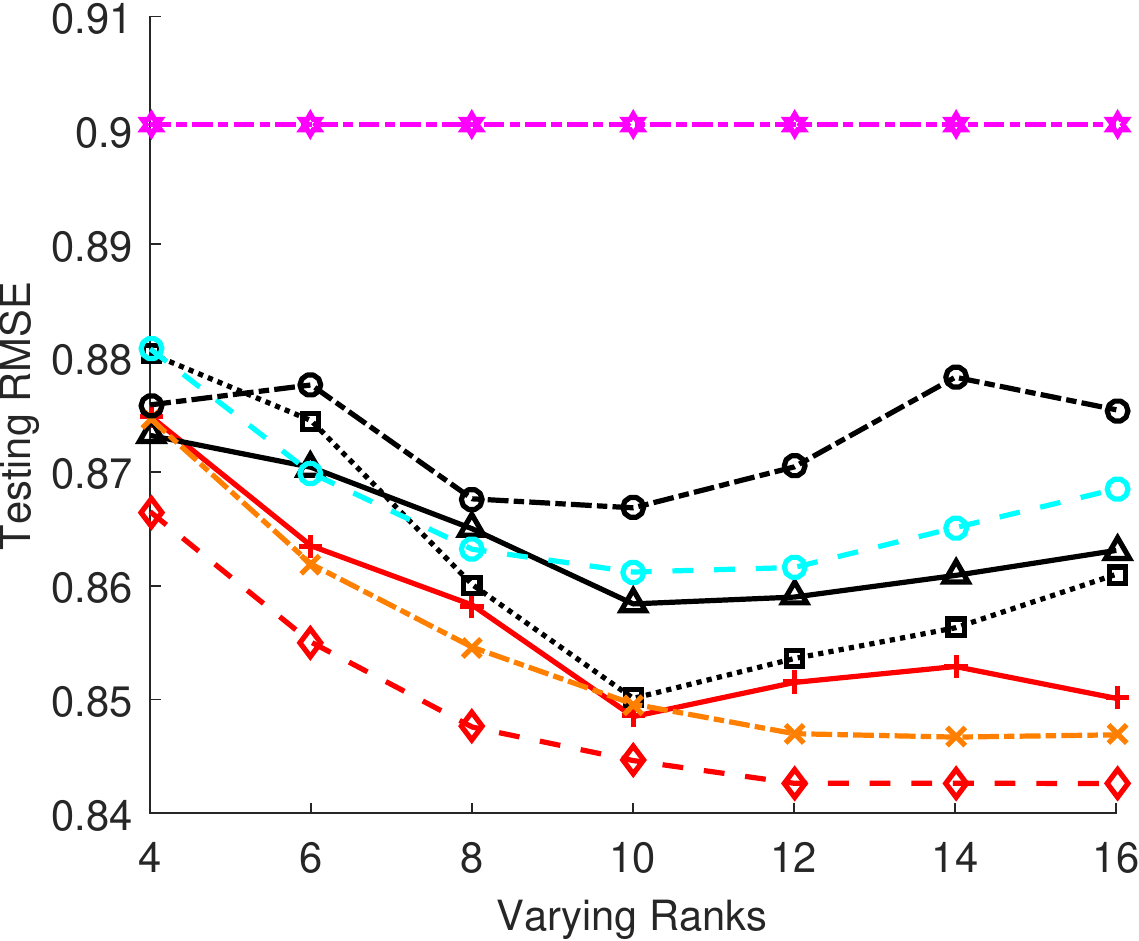}
			{\vspace{-3.5pt}}\\
			\scriptsize{\hspace{-13pt}} (a) MovieLens $1$M &  \scriptsize{\hspace{-13pt}} (b) MovieLens $10$M &
			\scriptsize{\hspace{-13pt}} (c) Netflix\\
		\end{tabular}
		%\vspace{-2mm}
		\caption{Matrix completion on recommendation system data sets using  our MSS and BSS, F/N \citep{scalableAlgorithm}, BiN \citep{scalableAlgorithm}, TriN \citep{aistats}, IRucLq \citep{iruclq}, IRNN \citep{irnn}, NNLS \citep{nnls}, LMaFit \citep{lmafit2}, Soft-ALS \citep{softals}. In the first row, we depict the testing RMSE along the executing time ($d=10$ for the factorization formulation). Note that the subfigure (a) is in log-10 scale in the time axis. IRucLq and IRNN are not included in the last two datasets due to their unaffordable computing time and high memory requirements. The second row tests the sensitivity w.r.t. varying ranks $d$.} %\vspace{-2mm}
		{\vspace{-0.5em}}
		\label{rs}
	\end{figure*}
	\subsection{Real Data}
	We conduct experiments on three real-world recommendation system datasets: MovieLens $1$M, MovieLens $10$M\footnote{\url{http://www.grouplens.org/node/73}},  and Netflix \citep{kdd}. The corresponding observed matrices are of size $6040 \times 3449$ with $o\%=4.80\%$,  $69878 \times 10677$ with $o\% = 1.34\%$, and  $480189 \times 17770$ with  $o\% = 1.18\%$, respectively. Here we fix the regularization $\lambda=200$ and tune it for other algorithms  in the range $[1,200]$. Following the experimental setup in \citep{scalableAlgorithm}, we randomly pick out  $80\%$ of the observed entries as the training data and use the remaining $20\%$ for testing. 
	The root mean squared errors (RMSEs) on the test set $T$, i.e., $\sqrt{ \sum_{(i,j)\in T}(X_{ij}-M_{ij})^2/ |T|}$,  are measured during the computation. Due to the non-convexity of the Schatten-$p$ norm solvers, we repeat them with different random initializations on MovieLens $1$M and $10$M, while find no significant difference. Results ($d=10$ for the factorization formulation) of all compared  algorithms  are shown in the first row of Fig. \ref{rs}. As IRNN and IRucLq are slow and require large memory,  we do not apply them to MovieLen $10$M and Netflix. From the figure,  we can see some obvious gaps in the time vs. testing RMSE curves between our BSS and MSS. This is mainly caused by the extrapolation technique employed by MSS. The testing RMSE of LMaFit increases after several iterations on MovieLens $1$M, which may be caused by the intrinsic unregularized model. Among all compared algorithms,  our MSS achieves the best performance across all the three data sets. It reaches the smallest testing RMSEs with the least time in less than $500$ iterations. We also conduct experiments to test the sensitivity with respect to different estimated ranks $d$. The results are shown in the second row, where our  MSS shows an apparent superiority over others, especially on large-scale datasets, i.e., MovienLen $10$M  and Netflix.

	\section{Conclusions}
	In this paper, we propose a unified surrogate for the Schatten-$p$ norm with two factor matrix norms.  We further extend it to multiple factor matrices and show that  all the factor norms can be \textit{convex and smooth} for any $p>0$. In contrast, the original Schatten-$p$ norm for $0<p<1$ is non-convex and non-smooth. We establish equivalence between the surrogate formulation and the original problem and show that the former should be preferred in practice. As an example we conduct experiments on matrix completion. By utilizing the convexity of the factor matrix norms, our accelerated PALM achieves the state-of-the-art performance.  Its sequence convergence is also established. 
	\section{Acknowledgements}
	Zhouchen Lin is supported by National Basic Research Program of China (973 Program) (grant no. 2015CB352502), National Natural Science Foundation (NSF) of China (grant nos. 61625301 and 61231002), and Qualcomm.
	\small{
		\bibliographystyle{aaai}
		\bibliography{refs}}
	\newpage
	
	\begin{center}
		\LARGE\textbf{Supplementary Material}
	\end{center}
	\section{Proofs}
	\subsection{Proof of Theorem 1}
	%For the self-completeness of proof, we retell Lemma 4 in the paper as follows:
	%\newtheorem*{mydef}{Lemma}
	%\begin{lemma} \label{lemma1} \cite{topics}[Theorem 3.3.14 (c)]
	%	For any matrix $A\in \mathbb{R}^{m \times l}$ and $B \in \mathbb{R}^{n \times l}$, denoting $\sigma_i(\cdot)$ as the singular value in descending order, we have 
	%	\begin{equation}
	%	\begin{split}
	%	&\sum_{i=1}^{\min\{m,n,l\} } \sigma_i^p(AB^T) \leq  \sum_{i=1}^{\min\{m,n,l\}} \sigma_i^p(A) \sigma_i^p(B),~~ \forall ~p>0.  
	%	\end{split}
	%	\end{equation}
	%\end{lemma} 
	
	\begin{proof}
		As $X=UV^T$, where $U \in \mathbb{R}^{m \times d}$, $V \in \mathbb{R}^{n \times d}$, for any $p, \mu, \nu > 0$ with $1/\mu+1/\nu =1$, we have
		\begin{equation} \label{relaxqp}
			\begin{split}
				&\sum_{i=1}^{\min\{m,n,d\}} \sigma_i^p(X) \\\leq & \sum_{i=1}^{\min\{m,n,d\}} \sigma_i^p(U) \sigma_i^p(V)  
				\\\leq &  \left(\sum_{i=1}^{\min\{m,n,d\}}\sigma_i^{p\mu}(U)\right)^{\frac{1}{\mu}}\left(\sum_{i=1}^{\min\{m,n,d\}}\sigma_i^{p\nu}(V)\right)^{\frac{1}{\nu}} \\
				\leq & \frac{1}{\mu} \left(\sum_{i=1}^{\min\{m,n,d\}}\sigma_i^{p\mu}(U)\right)+\frac{1}{\nu}\left(\sum_{i=1}^{\min\{m,n,d\}}\sigma_i^{p\nu}(V)\right) \\
				\leq & \frac{1}{\mu} \left(\sum_{i=1}^{\min\{m,d\}}\sigma_i^{p\mu}(U)\right)+\frac{1}{\nu}\left(\sum_{i=1}^{\min\{n,d\}}\sigma_i^{p\nu}(V)\right) 
			\end{split}
		\end{equation}      
		where the first inequality follows from Lemma 2 in the paper. The second inequality holds due to the Holder’s inequality, i.e., $\sum_{k=1}^n |x_k y_k| \leq (\sum^n_{k=1} |x_k|^{\mu})^{1/\mu}(\sum^n_{k=1} |y_k|^{\nu})^{1/\nu}$ with $1/\mu + 1/\nu = 1$.  The third inequality holds due to the Jensen’s inequality for the concave function $f(x) = log(x)$ by taking logarithm on both sides. The forth inequality is derived from the fact that $\min\{m,n,d\} \leq  \min\{m,d\}$ and $\min\{m,n,d\} \leq  \min\{n,d\}$.
		
		As ${1}/{p}=1/p_1+1/p_2$, substituting $\mu=p_1/p$ and $\nu= p_2/p$ in \eqref{relaxqp}, we have 
		\begin{equation}
			\begin{split}
				\frac{1}{p} \sum_{i=1}^{\min\{m,n,d\} } \sigma_i^p(X) \leq & \frac{1}{p_1} \left(\sum_{i=1}^{\min\{m,d\}}\sigma_i^{p_1}(U)\right)\\
				+&\frac{1}{p_2}\left(\sum_{i=1}^{\min\{n,d\}}\sigma_i^{p_2}(V)\right)\end{split}
		\end{equation}
		By $d \geq r$ and the definition of the Schatten-$p$ norm, we have
		\begin{equation} \label{xuv}
			\frac{1}{p} \|X\|_{S_p}^p \leq \frac{1}{p_1}\|U\|_{S_{p_1}}^{p_1}+ \frac{1}{p_2} \|V\|_{S_{p_2}}^{p_2}.
		\end{equation}
		
		When $d \leq \min\{m,n\}$, denote  $X = U_X \Sigma_X V_X^T$ as the SVD of $X$, where
		$U_X \in \mathbb{R}^{m \times d}$, $V_X \in  \mathbb{R}^{n \times d}$, and $\Sigma_X = \mathrm{diag}([\sigma_1(X),\ldots, \sigma_r(X), 0,\ldots, 0]) \in \mathbb{R}^{d\times d}$. Let $U^*=U_X \Sigma_X^{p/p_1}$ and $V^* =V_X\Sigma_X^{p/p_2}$, where $\Sigma^x$ is entry-wise power to $x$, then we have $X = U^* V^{*T}$ and 
		\begin{equation}
			\frac{1}{p} \|X\|_{S_p}^p = \frac{1}{p_1}\|U^*\|_{S_{p_1}}^{p_1}+ \frac{1}{p_2} \|V^*\|_{S_{p_2}}^{p_2}.
		\end{equation}
		
		When $d > \min\{m,n\}$, the above equality also holds by adding extra $m-d$ (or $n-d$) zero columns to $U_X$ (or $V_X$).
		
		This completes the proof.
	\end{proof}
	\subsection{Proof of Corollary 2}
	\begin{proof}
		Without loss of generality, we assume that $d_i =d  \leq \min\{m,n\}, i =1, \ldots, I$. As the inequality of \eqref{xuv} naturally extends to $I \geq 2$ factor matrices with $X=\prod_{i=1}^{I}X_i$ and ${1}/{p}= \sum_{1}^I 1/p_i$, we only need to show that the equality can hold. Let $X_1^* = U_X \Sigma_X^{p/p_1}$, $X_i^* = \Sigma_X^{p/p_i}, i=2,\ldots,I-1$, and $X_I^* = \Sigma_X^{p/p_I} V_X^T$, then we have $X=\prod_{i=1}^{I}X_i^*$ and
		\begin{equation} \label{general2}
			\begin{split}
				\frac{1}{p} \|X\|_{S_p}^p
				= \min_{X_i:X=\prod_{i=1}^{I}X_i} \sum_{i=1}^{I} ~\frac{1}{p_i}\|X_i^*\|_{S_{p_i}}^{p_i}.
			\end{split} 
		\end{equation} 
		This completes the proof.
	\end{proof}
	
	\subsection{Proof of Proposition 3}
	\begin{proof}
		
		Denote $\lfloor 1/p\rfloor$ as the largest integer not exceeding $1/p$. When $\lfloor 1/p\rfloor=1/p$, we can choose $I=\lfloor 1/p\rfloor$ with all $p_i= 1$. When $\lfloor 1/p\rfloor<1/p$, we can choose $p_i =1, i= 1,\ldots, I-1$ and $p_I=1/(1/p-\lfloor 1/p\rfloor)$. This concludes the proof of the case (a). For the case of  (b), we  choose $I=\lfloor 1/p\rfloor+1$ with $p_i =I p >1$.  
	\end{proof}
	
	\subsection{Proof of Theorem 3}
	\begin{proof}
		As $d \geq r^*$,  any matrix $X$ with rank $r^*$ can be written in the form of $X=\prod_{i=1}^{I}X_i$. The formulation (15) in the paper can be written as
		\begin{eqnarray}
			\label{temp1}
			&  &\min_{X_i,~i = 1, \ldots, I} f \left(\prod_{i=1}^{I}X_i\right) + \sum_{i=1}^{I} ~\frac{\lambda}{p_i}\|X_i\|_{S_{p_i}}^{p_i}\\
			\notag
			&= & \min_{X_i,~i = 1, \ldots, I} f\left(\prod_{i=1}^{I}X_i\right) +~\frac{\lambda}{p}\left\|\prod_{i=1}^{I}X_i\right\|_{S_{p}}^{p} \\
			\notag
			&= &   \min_{X,\mathrm{rank}(X)=r^{*}}  f(X)+~\frac{\lambda}{p}\|X\|_{S_{p}}^{p} \\
			\label{temp2}
			&= &  \quad  \min_{X}  f(X)+~\frac{\lambda}{p}\|X\|_{S_{p}}^{p},
		\end{eqnarray}	
		where the first equation is by Corollary 2 and the last equality is by the fact that the rank of the solution to \eqref{temp2} is $r^*$.  The equivalence of the criteria in \eqref{temp1} and \eqref{temp2} completes the proof.
	\end{proof}
	
	\subsection{Proof of Theorem 4}
	Before giving our proof, we introduce some backgrounds.
	\begin{definition} (Semi-algebraic sets and functions \citep{palm}).  A subset $S \subset \mathbb{R}^n$  is a real semi-algebraic set if there exists a finite number of real polynomial functions $g_{ij}, h_{ij}: \mathbb{R}^{n} \rightarrow \mathbb{R}$  such that
		\begin{equation*}
			S=\bigcup_{j}^{n_j} \bigcap_{i}^{n_i} \{ u  \in \mathbb{R}^{n}: g_{ij}(u)=0~\text{and}~ h_{ij}(u) <0\}.  
		\end{equation*}
		Moreover, a function $f: \mathbb{R}^n \rightarrow (-\infty, +\infty] $ is called semi-algebraic if its graph $\{(u,t)\} \in \mathbb{R}^{n+1} : f(u) = t\}$ is a semi-algebraic set.
	\end{definition}
	Semi-algebraic sets are stable under the operations of finite union, finite intersections, complementation and Cartesian product. The following are the semi-algebraic functions or the property of semi-algebraic functions used below:
	\begin{itemize}
		\item Real polynomial functions.
		\item Finite sums and product of semi-algebraic functions.
		\item Composition of semi-algebraic functions.
	\end{itemize}
	\begin{proposition} \label{schattensemi}
		The matrix operator $\|\cdot\|_{S_p}^p$ with $p >0$ being rational is a semi-algebraic function. 
	\end{proposition}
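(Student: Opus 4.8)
The plan is to realize $\|\cdot\|_{S_p}^p$ as a composition and finite sum of semi-algebraic maps, so that the closure properties listed above apply directly. The key observation is that, for $X \in \mathbb{R}^{m\times n}$ and any $p>0$,
\begin{equation*}
\|X\|_{S_p}^p = \sum_{i=1}^{n} \lambda_i(X^TX)^{p/2},
\end{equation*}
where $\lambda_i(X^TX)$ are the nonnegative eigenvalues of $X^TX$ counted with multiplicity: the squared singular values equal the nonzero eigenvalues of $X^TX$, and any extra zero eigenvalues contribute $0^{p/2}=0$ precisely because $p>0$. Since $X\mapsto X^TX$ is a polynomial map (each entry is a polynomial in the entries of $X$), it is semi-algebraic, and by closure under composition it suffices to prove that the eigenvalue power-sum $\Phi(A)=\sum_i \lambda_i(A)^{q}$ is semi-algebraic on symmetric positive semidefinite $A$, where $q=p/2$ is rational.

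First I would describe the graph of $\Phi$ using auxiliary variables. Writing $p=a/b$ (hence $q=a/(2b)$) with positive integers $a,b$, I introduce unknowns $\lambda_1,\dots,\lambda_n\ge 0$ and $s_1,\dots,s_n\ge 0$ and impose: (i) that $\{\lambda_i\}$ is exactly the multiset of eigenvalues of $A$, encoded by equating the elementary symmetric polynomials $e_j(\lambda_1,\dots,\lambda_n)$ to the coefficients of $\det(\lambda I - A)$, which are themselves polynomials in the entries of $A$; (ii) $s_i^{2b}=\lambda_i^{a}$ together with $s_i\ge 0$ and $\lambda_i\ge 0$, which exactly encodes $s_i=\lambda_i^{q}$; and (iii) $t=\sum_i s_i$. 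Each condition is a finite conjunction of polynomial equalities and inequalities, so the set they cut out in the variables $(A,t,\lambda,s)$ is semi-algebraic. Projecting onto the $(A,t)$ coordinates and invoking the Tarski--Seidenberg theorem (which is precisely what underlies the stated closure under composition) shows that $\{(A,t):t=\Phi(A)\}$ is semi-algebraic; hence $\Phi$, and therefore the composition $X\mapsto\Phi(X^TX)=\|X\|_{S_p}^p$, is semi-algebraic.

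The step I expect to be the main obstacle is item (i): singular values and eigenvalues are not polynomial functions of the matrix entries, so one cannot write $\lambda_i(A)$ in closed form and must instead characterize the whole spectrum implicitly through the characteristic polynomial. Matching elementary symmetric polynomials to the characteristic coefficients captures the eigenvalues only as an unordered multiset, but this is harmless here because the target $\sum_i\lambda_i^{q}$ is symmetric, so no ordering constraints are needed. A secondary point worth flagging is that the rationality of $p$ is essential and is used exactly in (ii): for irrational $p$ the scalar map $\sigma\mapsto\sigma^p$ is not semi-algebraic, so the relation $s^{2b}=\lambda^a$ has no polynomial analogue. Finally, the nonnegativity constraints make the root branch single-valued and select the correct real power, while $p>0$ is what lets the zero eigenvalues of $X^TX$ drop out of the sum.
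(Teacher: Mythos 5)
Your proof is correct, and it starts from the same identity as the paper's proof---$\|X\|_{S_p}^p=\sum_i \lambda_i(X^TX)^{p/2}$ (the paper uses $XX^T$, which is immaterial)---but then establishes semi-algebraicity by a genuinely different and more self-contained route. The paper's proof asserts that the eigenvalues, being roots of the characteristic polynomial, are semi-algebraic functions, cites \citep{palm}[Example 4] for the semi-algebraicity of the vector operator $\|\cdot\|_p^p$ with rational $p$, and concludes by closure under composition; this is correct but terse, since eigenvalues are not polynomial in the matrix entries, and making the composition rigorous requires either ordering the spectrum (the $i$-th largest eigenvalue is indeed a semi-algebraic function) or an implicit description of the whole multiset. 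You supply exactly that implicit description: you write the graph of $A\mapsto\sum_i\lambda_i(A)^{p/2}$ in first-order form, matching the elementary symmetric polynomials of auxiliary variables $\lambda_i$ to the characteristic coefficients, encoding the rational power $s_i=\lambda_i^{a/(2b)}$ by the polynomial relation $s_i^{2b}=\lambda_i^{a}$ with nonnegativity constraints, and projecting via Tarski--Seidenberg. This removes any need to prove semi-algebraicity of individual eigenvalue functions and replaces the external citation for the scalar power map with a two-line polynomial encoding; your remark that the unordered-multiset ambiguity is harmless because $\sum_i\lambda_i^{q}$ is a symmetric function is precisely the point the paper's composition argument leaves implicit. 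What the paper's version buys is brevity by leaning on known results and closure properties; what yours buys is a complete argument from the Tarski--Seidenberg theorem alone, which in effect fills in the one step of the paper's proof (``eigenvalues\ldots thus being semi-algebraic'') that a careful reader would flag, while using the rationality of $p$ and the positivity $p>0$ in exactly the same places the paper does.
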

	\begin{proof}
		% 	 	We first present a more intuitive definition of Schatten-$p$ norm as follows \cite{nie}:
		% 	 	\begin{equation}
		% 	   \|X\|_{S_p}= \left( \mathrm{Tr}\left((X^TX)^{p/2}\right) \right)^{1/p}  
		% 	 	\end{equation}
		% By adopting the properties lis
		Consider a matrix $X \in \mathbb{R}^{m \times n}$. We can rewrite $\|X\|_{S_p}^p=\sum_{i=1}^{m} e_i^{p/2}(XX^T)$, where $e_i$ represents the $i$-th eigenvalue of $XX^T$. As the eigenvalues are roots of the corresponding characteristic polynomial function, thus being semi-algebraic. Combining with that the operator $\| \cdot\|_p^p$ on vectors is semi-algebraic with $ p>0$ being rational (\citep{palm} [Example 4]), it is natural to conclude that their composition, i.e., the matrix operator $\|\cdot\|_{S_p}^p$, is also semi-algebraic.

		%	According to ,    Since the operator $\|\cdot\|_{S_p}^p$ is equivalent to  $\| \cdot\|_p^p$ on singular values of the associated matrix, it is natural that the operator $\|\cdot\|_{S_p}^p$ is also semi-algebraic. 
	\end{proof}
	
	As Algorithm 1 is a special case of the general optimization algorithm in \citep{nonconvexabcd}, to prove the convergence of Algorithm 1, we only need to show that the conditions ensuring  sequence convergence of \citep{nonconvexabcd} [Theorem 2.7] are satisfied.
	\begin{proposition}\label{algorithmconditions} The sequence generated by Algorithm 1 is a Cauchy sequence and converges to a critical point if the following conditions hold:
		\begin{enumerate}[(a)]
			\item The sequence of $F(\mathcal{X}^k)$  is non-increasing. \label{noninc}
			\item Within any bounded consecutive iterations, every block $X_i$ is updated at least one time. \label{onetime}
			\item $\{\mathcal{X}^k\}$ is a bounded sequence. \label{bounded}
			\item $F$ is a semi-algebraic function. \label{semi}
			\item  $\nabla_{{i}}^k f(X_{i})$ has a Lipschitz constant $L^{k-1}_i$ with respect to $X_{i}$,
			and there exist constants $0<\ell\le L<\infty,$ such that $\ell\le L_i^{k-1}\le L$ for all $k$ and $i$. \label{lp1}
			\item $\nabla_{\mathcal{X}} f(\mathcal{X})$ has Lipschitz constant on any bounded set. \label{lp2} 
			%\item For each $i$, $\nabla_{\x_i} f(\x)$ is Lipschitz continuous within $B_{4\rho}(\bar{x})$ with respect to $x$.
		\end{enumerate}
	\end{proposition}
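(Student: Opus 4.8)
The plan is to obtain this proposition as a direct specialization of \citep{nonconvexabcd} [Theorem 2.7], whose abstract block-coordinate-update framework already contains Algorithm \ref{algorithm} as a particular instance. First I would make this embedding explicit: the inner loop (lines 2--6) performs a Gauss--Seidel sweep over the blocks $X_1,\ldots,X_I$, each update being the prox-linear step \eqref{matrixsubproblems} evaluated at the extrapolated point $\hat X_i^{k-1}$ of \eqref{hatx}, while the safeguard in lines 7--9 --- which re-runs the sweep without extrapolation whenever the objective fails to decrease --- is exactly the monotonicity-enforcing restart that the framework permits. With this identification in place, the six items are nothing but the standing hypotheses of \citep{nonconvexabcd} [Theorem 2.7] transcribed for the present objective $F$ in \eqref{matrixcompletion}.

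Next I would match the conditions to the ingredients of the Kurdyka--\L ojasiewicz (KL) convergence machinery underlying that theorem. The engine is the KL property of $F$: since every proper lower semi-continuous semi-algebraic function satisfies the KL inequality, condition (d) supplies precisely this property (and $F$ is semi-algebraic because the quadratic loss is a polynomial while each $\|X_i\|_{S_{p_i}}^{p_i}$ is semi-algebraic for rational $p_i$ by Proposition \ref{schattensemi}). Condition (a) furnishes the sufficient-decrease/monotonicity that KL proofs require. Conditions (e) and (f) --- uniform two-sided bounds on the per-block moduli $L_i^{k-1}$ together with a gradient that is Lipschitz on bounded sets --- yield the relative-error (subgradient) estimate, namely that some element of $\partial F(\mathcal{X}^k)$ is bounded by a constant times $\sum_i \|X_i^k - X_i^{k-1}\|$. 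Condition (c) guarantees accumulation points and keeps the local Lipschitz property (f) in force along the iterates, and condition (b) ensures every block is refreshed within a bounded window so that the whole-variable subgradient can be assembled from finitely many consecutive block differences. Feeding these into \citep{nonconvexabcd} [Theorem 2.7] then yields finite trajectory length, hence the Cauchy property, and convergence to a critical point.

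The hard part will not be the KL step but confirming that the accelerated iteration still fits the abstract scheme cleanly. Extrapolation \eqref{hatx} can destroy per-step descent, so the argument must use the restart (lines 7--9) to recover the monotonicity assumed in (a), and the sufficient-decrease and relative-error estimates must be read off the retained iterates rather than the tentative extrapolated ones. Concretely, the subgradient bound has to absorb the inertial term $w_i^{k-1}(X_i^{k-1}-X_i^{k-2})$ appearing in \eqref{hatx}; since $w_i^{k-1}\in[0,1)$ by \eqref{matrixw}, this term stays of the same order as the successive differences, which is all the framework needs. Once this compatibility is verified, the proposition follows at once from \citep{nonconvexabcd} [Theorem 2.7].
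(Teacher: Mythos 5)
Your proposal matches the paper's own treatment: the paper likewise offers no independent argument for this proposition but presents it as a direct specialization of \citep{nonconvexabcd} [Theorem 2.7], with Algorithm 1 identified as an instance of that block prox-linear scheme (extrapolated updates safeguarded by the monotone restart in lines 7--9) and conditions (a)--(f) transcribing that theorem's hypotheses. Your additional discussion of the Kurdyka--\L ojasiewicz mechanics, the relative-error subgradient bound, and the absorption of the inertial term $w_i^{k-1}(X_i^{k-1}-X_i^{k-2})$ with $w_i^{k-1}\in[0,1)$ merely unpacks the internals of the cited theorem and is consistent with it, so the argument is the same.
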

	
	Now we are ready to prove Theorem 4 by verifying that Algorithm 1 satisfies all the conditions in Proposition \ref{algorithmconditions}.
	\begin{proof}
		Conditions \eqref{noninc} and \eqref{onetime} naturally hold by the update of Algorithm 1.
		By the non-increasing property, we have
		\begin{equation}
			F(\mathcal{X}_1) \geq F(\mathcal{X}_k) \geq \sum_{i=1}^{I} ~\frac{\lambda}{p_i}\|X_i^k\|_{S_{p_i}}^{p_i}
		\end{equation} 	    
		Then Condition \eqref{bounded} holds.
		
		As  $f(\mathcal{X})=\frac{1}{2} \left\|W \odot (M - \prod_{i=1}^{I}X_i) \right\|_F^2$, which is polynomial, it is a semi-algebraic function. Combining with Proposition \ref{schattensemi}, Condition \eqref{semi} holds.
		
		We retell the Equation (20) in the paper as follows:
		\begin{equation}
			L_i^{k-1}=\max\{\|A_{-i}^{k}\|_2^2 \|A_{+i}^{k-1}\|_2^2 , \epsilon \},
		\end{equation} 
		where $A_{-i}^k=X^{k}_1 \cdots X^{k}_{i-1}$ and $A_{+i}^{k-1}=X^{k-1}_{i+1}\cdots X^{k-1}_{I}$. Combining with that $\{\mathcal{X}^k\}$ is bounded, we can always  find an upper bound $L <\infty$ and set the lower bound $\ell = \epsilon$. Thus Condition \eqref{lp1} holds.
		
		%By the definition of Lipschitz constant, 
		%$\|\nabla_{\mathcal{X}} f(\mathcal{X}) \|$
		Condition \eqref{lp2} naturally holds by following the similar proof as Condition \eqref{lp1} with an arbitrary bounded $\mathcal{X}$. 
	\end{proof}
	
\end{document}